\DeclareMathOperator{\argmax}{argmax}
\renewcommand{\epsilon}{\varepsilon}
\renewcommand{\leq}{\leqslant}
\renewcommand{\geq}{\geqslant}
\newtheorem{assumption}{Assumption}
\title{Sparse Stochastic Bandits}
\newtheorem{lemma}{Lemma}
\newtheorem{theorem}{Theorem}
\newtheorem{definition}{Definition}
\newtheorem{remark}{Remark}
 \author{Joon Kwon  \\
 CMAP, \'Ecole polytechnique, Universit\'e Paris--Saclay\\
\texttt{joon.kwon@ens-lyon.org}
 \and
 Vianney Perchet \\
 CMLA, \'Ecole Normale Sup\'erieure Paris--Saclay\\ \& Criteo Research\\
 \texttt{vianney.perchet@normalesup.org} \and
 Claire Vernade\thanks{J.\ Kwon was supported by a public grant as part of the
  Investissement d'avenir project, reference ANR-11-LABX-0056-LMH. V.\ Perchet has benefitted from the support of the ANR (grant ANR-13-JS01-0004-01), of the \emph{FMJH Program Gaspard Monge in optimization and operations research} (supported in part by EDF) and from the Labex LMH. C.\ Vernade  was also partially supported by the Machine Learning for Big Data Chair at T\'el\'ecom ParisTech.}
 \\
  LTCI, T\'el\'ecom ParisTech\\
  \texttt{claire.vernade@telecom-paristech.fr}
 }
\newcommand\blfootnote[1]{%
	\begingroup
	\renewcommand\thefootnote{}\footnote{#1}%
	\addtocounter{footnote}{-1}%
	\endgroup
}
\newcommand{\algosparse}{\textsc{SparseUCB}}
\newcommand{\ucb}{\textsc{UCB}}
\newcommand{\N}{\mathds{N}}
\DeclareMathOperator{\Reg}{Reg}
\begin{document}

\maketitle

\begin{abstract}
In the classical multi-armed bandit problem, $d$ arms are available to
the decision maker who pulls them sequentially in order to maximize
his cumulative reward. Guarantees can be obtained on a relative quantity called regret, which scales linearly with $d$ (or with
$\sqrt{d}$ in the minimax sense). We here consider the \emph{sparse
  case} of this classical problem in the sense that only a small number of arms, namely $s<d$, have a \emph{positive} expected reward. We are able to leverage this additional assumption to provide an algorithm whose regret scales with $s$ instead of $d$. Moreover, we prove that this algorithm is optimal by providing a matching lower bound -- at least for a wide and pertinent range of parameters that we determine -- and by evaluating its performance on simulated data.
\end{abstract}

\blfootnote{Accepted for presentation at Conference on
	Learning Theory (COLT) 2017}


\section{Introduction}
\label{sec:introduction}
%

We consider  the celebrated stochastic multi-armed bandit problem \cite{robbins1985some}, 
where a decision maker sequentially samples from $d\geqslant 1$ processes, also called \emph{arms}, aiming at maximizing its cumulative reward. Specifically, those arms are characterized by their distributions $\nu_1,\dots,\nu_d$ and pulling arm $i\in [d]:=\{1,...,d\}$ at time $t$ yields a reward $X_{i}(t)\sim \nu_{i}$, the sequence $(X_{i}(t))_{t\geq 1}$ being assumed to be \emph{i.i.d}.  
There are many motivations behind the study of those models, ranging from random clinical trials, to maximization of the click through rate, portfolio optimization, etc.

An algorithm (or \emph{policy}) maps anterior observations to the next arm $I(t)\in [d]$ to be pulled. The performance of a given algorithm is evaluated by its \emph{cumulative
regret} $\operatorname{Reg}(T)$ defined as the difference between the  rewards gathered by the sequence
$(I(t))_{1\leq t \leq T}$ and those that might have been obtained in expectation by always behaving optimally, that is, by pulling the arm with maximal mean $\mu_i:=\mathbb{E}_{\nu_i}[X]$ at each round:
\[
  \operatorname{Reg}(T) = T\mu_* -\sum_{t=1}^{T} X_{I(t)}(t) \quad \text{ where } \mu_* = \max_{i\in[d]} \mu_i\ .
\]
The classical multi-armed bandit problem is now well understood, and there exist algorithms minimizing the regret such that
\[\mathbb{E}\left[    \operatorname{Reg}(T) \right]\lesssim \sum_{\substack{i\in [d]\\\Delta_i>0}} \frac{\log(T)}{\Delta_i}, \quad \text{ where } \ \Delta_i = \mu_* - \mu_i\ , \]
and where the notation $\lesssim$ indicates that the inequality holds up
to some universal multiplicative constants and some additive
constants\footnote{We focus, for the sake of clarity, on the leading
  terms in $T$ with explicit dependencies in the different parameters
  of the problems.}. 
  Converse statements have also been proved, first by \cite{lai1985asymptotically} and then by \cite{burnetas1996optimal}: 
  Any \emph{consistent}
policy (i.e.\ whose regret is always less than $T^\alpha$ for all
$\alpha>0$) always have a regret larger than  $\sum_{i=1}^d \frac{\log(T)}{\Delta_i}$ (again, up to some  constants). 

When $T$ is fixed and the parameters $\mu_i$ are chosen to maximize regret, the \emph{distribution-independent bounds} are of order $\sqrt{dT}$ as first shown in \cite{cesa2006prediction}.

\medskip

The main drawback of those results is that the regret scales linearly
with the number of arms $d$,or with $\sqrt{d}$ in the minimax
analysis. Since upper and lower bounds match, this is actually
ineluctable. On the other hand, we aim at leveraging an additional
assumption to reduce that (linear) dependency in $d$ and even get rid
of it, if possible. We therefore define and investigate the
\emph{sparse bandit problem} (SPB) where the decision maker knows
\emph{a priori} that only $s$ of the $d$ arms have a
\emph{significant} mean $\mu_i$.

Specifically, we assume that exactly $s$ arms have positive means\footnote{Equivalently, we could be given a threshold $\tau$ and the exact number $s$ of arms with means strictly greater than $\tau$.}. Without loss of generality, we number the arms in
nonincreasing order and write
\[ \mu_*=\mu_1\geqslant \mu_2\geqslant \dots \geqslant \mu_s>0\geqslant \mu_{s+1}\geqslant \dots \geqslant \mu_d. \]
A key quantity will be the lowest positive mean $\mu_s$: if $\mu_s$ is
arbitrarily close to 0, then the sparsity assumption is useless. On
the other side of the spectrum, if $\mu_s \gg 0$, then the sparsity
assumption will turn out to be helpful.

Informally, we aim at replacing the dependency in the total number of
arms $d$ with the same dependency in the number $s$ of arms with
positive means. In other words, we wish to achieve an upper bound of
the following kind
\[
\mathbb{E}\left[  \operatorname{Reg}(T) \right] \lesssim \sum_{\substack{i\in [s]\\\Delta_i>0}} \frac{\log(T)}{\Delta_i}, 
\]
whenever this is possible. Notice that the above is precisely the
optimal regret bound \emph{if the agent knew in advance which are $s$ arms with positive means}. In the worst case, this  gives a distribution independent upper bound of the order of $\sqrt{sT}$  instead of the classical $\sqrt{dT}$ (up to logarithmic terms).
\bigskip

The Sparse Bandit problem is therefore a variation of the classical
stochastic multi-armed bandit problem (see \cite{bubeck2012regret} for
a survey) in which the agent knows \emph{the number of arms with positive means}.

There have been some works regarding sparsity assumptions in bandit problems. In the full information setting, some of them focus on \emph{sparse reward vectors}, i.e., at most $s$ components of $(X_1(t),\ldots,X_d(t))$ are positive (see for instance \cite{langford2009sparse,kwon2016gains}). Another considered problem is the one of \emph{sparse linear bandits} \cite{carpentier2012bandit,abbasi2012online,gerchinovitz2013sparsity,lattimore2015linear} in which the underlying unknown vector of parameter is assumed to be sparse -- that is with a constraint on its $L_1$ norm -- or even spiky in the \emph{crude and very specific case where $s=1$} \cite{bubeck2013bounded}.
However, none of the previously cited work tackles the following concrete problem. Assume that you are planning a marketing campaign for wich you have thousands of possible products to display. Most probably, many of them will be similar and have similar, very low expected returns but you do not have any possibility to know that in advance. In many common datasets such as Yandex's one \footnote{see https://www.kaggle.com/c/yandex-personalized-web-search-challenge}, depending on the query it is usual to have only 50 items out of 1500 that can be considered as relevant. Consequently, in order to avoid exploring thoses \emph{bad} items, you want to be able to set rules to eliminate them as quickly as possible and get a regret that scales in the number of \emph{good} arms.

\subsection{Contributions}
We introduce and investigate the sparse bandits problem by deriving
an asymptotic lower bound on the regret. We give an analogous result to the seminal bound of \cite{lai1985asymptotically}, and we construct an anytime algorithm \algosparse~that uses the optimistic principle of \cite{auer2002finite} together with the sparsity information available in order to reach optimal performance, up to constant terms.

Concretely, the lower bound that we prove distinguishes the possible
behavior of any \emph{uniformly efficient} algorithm according to the
value of the sparsity information available to the agent. To fix
ideas, assume that $\mu_1=1$ and for $2\leqslant i\leqslant s$, $\mu_i=\mu$, $\Delta_i = \Delta=1-\mu$. Then, we show that, if $\frac{d}{s}> \frac{\Delta}{\mu^2}+1$,  the sparsity of the problem is highly relevant so that regret is asymptotically lower-bounded as
$$
\liminf_{T\to +\infty}\frac{\Reg(T)}{\log(T)} \geq  \max\Big\{ \frac{s}{2\Delta},\  \frac{s\Delta}{2\mu^2}\Big\} =  \frac{s}{2\Delta}, \qquad \text{ if } \mu \geq \frac{1}{2}.
$$
The performance of the \algosparse~algorithm matches the lower bound
as it guarantees
\[
\Reg(T) \lesssim \max\Big\{ \frac{s\log(T)}{2\Delta},\  \frac{s\Delta\log(T)}{2\mu^2}\Big\} =  \frac{s\log (T)}{2\Delta}, \qquad \text{ if } \mu \geq \frac{1}{2}.
\]


\section{The Stochastic Sparse Bandits Problem}
\label{sec:framework}
%
We consider the classical \emph{stochastic} multi-armed bandit
problem, where a decision maker samples sequentially from $d \in \mathbb{N}$
i.i.d.\ processes $\left( (X_i(t))_{t\geqslant 1} \right)_{i\in
  [d]}$. We will keep denoting by  $\nu_i$ the probability distribution of $X_i(t)$
and $\mathbb{E}_{\nu_i}[X_i(t)] = \mu_i$ its mean. 
\medskip

The decision maker pulls at stage $t \geqslant 1$ an arm $I(t)\in
[d]$, and receives reward $X_i(t)$ which is his only observation
(specifically, he does not observe $X_i(t)$ for $i \neq I(t)$). The
(expected) cumulated reward of the decision
maker after $T\geqslant 1$ stages is then $\sum_{t=1}^T \mu_{I(t)}$
and his performance is evaluated through his \emph{regret}, defined as the difference between
the highest possible expected reward (had the means $\mu_1,\ldots,\mu_d$ been known in advance), and the actual reward. 
In other words:
\[\operatorname{Reg}(T) := T\mu_* - \sum_{t=1}^T X_{I(t)}(t), \quad \text{ where }  \mu_* = \max_{i \in [d]} \mu_i. \]
If we introduce the notations $\Delta_i = \mu_* - \mu_i$ and
$N_i(t):=\sum_{\tau=1}^{t-1}\mathds{1}\{I(\tau)=i\}$, the number of
times the decision maker pulled arm $i$ up to time $t-1$, then the
expected regret writes
\[ \mathbb{E}\left[  \operatorname{Reg}(T) \right]=\sum_{i=1}^d\Delta_i\mathbb{E}\left[ N_i(T+1) \right] \ . \]
This expression indicates that the (expected) regret should scale with $d$. And this is indeed the case without further assumption to leverage.

\begin{assumption} $s$ arms have positive means (i.e.,
  $\mu_i >0$), while the other $d-s$ arms have nonpositive means (i.e., $\mu_{i' }\leq 0$).
\end{assumption}
An arm with positive (resp.\ nonpositive) mean will also be refered to
as \emph{good} (resp.\ \emph{bad}).
In the remaining of the paper, we will assume, without loss of
generality and to simplify notation, that the means are re-ordered in
nonincreasing order:
$$\mu_{1}\geqslant  \mu_2 \geq \ldots \geq \mu_s >0 \geq \mu_{s+1} \geq \ldots \geq \mu_d\ .$$

We  will also denote  by $\overline{X}_i(n)$ the empirical mean of the
$n$ first
realizations of arm $i$ so that
\[ \overline{X}_i(N_i(t))=\frac{1}{N_i(t)}\sum_{\tau=1}^{t-1}\mathbbm{1}{\left\{ I(\tau) = i  \right\} }X_i(\tau), \]
with the convention that $\overline{X}_{i}(0)=0$. 
Besides,  we assume that the distributions $\nu_i$ are sub-Gaussians,
meaning that for all arm $i \in [d]$ and all $a>0$ and $t\geqslant 1$, we have
\[\mathbb{P}\left[ \left| X_i(t)-\mu_i  \right| >a \right] \leq 2e^{-a^2/2}. \]
For instance, this is the case if the $X_i(t)$ are assumed to be bounded, with support included in $[-1,1]$. Together with a Chernoff bound, one can
easily see that this implies for all arm $i \in [d]$ and all $a>0$ and $n\geqslant 1$,
\[ \mathbb{P}\left[ \overline{X}_i(n)-\mu_i\geqslant a \right]\leqslant e^{-na^2/2}.  \]



\section{Lower Bound}
\label{sec:lowerbound}
%
This section is devoted to proving a lower bound on the regret of any \emph{uniformly efficient} algorithm for the sparse bandit problem. To avoid too heavy expressions, the lower bound we establish holds for problems where the bad arms have a null expected reward, though handling general negative means does not require huge modifications from the given proof. Our goal is to provide a result that is easily generalizable to any stochastic bandit problem containing a sparsity information in the form of a threshold on the values of the expected return of the arms of interest. A generalization of the presented bound can be found in Appendix \ref{ap:LB2}.

\begin{definition}
	\label{def:uniformly_eff}
	An algorithm is \emph{uniformly efficient} if for any sparse
	bandit problem and all $\alpha\in(0,1]$, its expected regret satisfies
	$\mathbb{E}\left[   \operatorname{Reg}(T) \right] = o(T^\alpha)$.
\end{definition}

We state the bound for Gaussian bandit models with a 
fixed variance equal to $1/4$. In that case, a distribution is simply 
characterized by its mean $\mu$ and the Kullback-Leibler (KL) divergence between 
two models $\mu$ and $\mu'$ is equal to $2(\mu-\mu')^2$.
Consider
\[ \mathcal{S}(d,s)=\left\{ \mu=(\mu_1,\dots,\mu_d)\in
\mathbb{R}^d_+\,\middle|\, \text{$\mu$ has exactly $s$ positive components} \right\}.  \]


\begin{theorem}
\label{cor:LB}
Let $\mu\in \mathcal{S}(d,s)$ such that its components are nonincreasing:
\[ \mu_1\geqslant \mu_2\geqslant \dots \geqslant \mu_s>\mu_{s+1}=\dots =\mu_d= 0, \]
and we denote $\Delta_i=\mu_1-\mu_i$ for all $i\in [d]$.
Then, for any uniformly efficient algorithm, played against arms whose
distributions are Gaussian with variance $1/4$ and with respective means $\mu_1,\dots,\mu_d$, one of the following asymptotic lower bounds hold.
\begin{itemize}
		\item If $\frac{d-s}{\mu_1}- 
			\sum_{\substack{i\in [s]\\\Delta_i>0}}^{}\frac{\Delta_i}{\mu_i^2} > 0$,
		\begin{equation}
		\label{cor:LB0}
		\liminf_{T\to \infty}\frac{\operatorname{Reg}(T)}{\log(T)} \geq \sum_{\substack{i\in [s]\\\Delta_i>0}}^{}\max\Big\{ \frac{1}{2\Delta_i},\frac{\Delta_i}{2\mu_i^2}\Big\}
		\end{equation}
		\item otherwise, there exist $k\leq s$ such that $\frac{d-s}{\mu_1}- 
		\sum_{i=k}^{s}\frac{\Delta_i}{\mu_i^2} < 0$, and the lower bound is 
		\begin{equation}
		\label{cor:LB1}
		\liminf_{T\to \infty}\frac{\operatorname{Reg}(T)}{\log(T)} \geq \sum_{\substack{i\in [k]\\\Delta_i>0}}^{} 
		\frac{1}{2\Delta_i} + \sum_{i=k+1}^{s} 
		\frac{\mu_{k}^2}{\mu_i^2}\frac{\Delta_i}{2\Delta_{k}^2}
		+\frac{(d-s)}{2\mu_1}
		\left(1-\frac{\mu_{k}^2}{\Delta_{k}^2}\right).
		\end{equation}
	\end{itemize}

\end{theorem}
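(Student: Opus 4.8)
The plan is to establish the bound through the Lai--Robbins change-of-measure method, in the transportation-inequality form of Garivier--Kaufmann--Lattimore. First I would fix the Gaussian model $\nu$ with means $\mu$ and, for each competing model $\nu'\in\mathcal S(d,s)$ obtained by perturbing finitely many coordinates, invoke the fundamental inequality: for every event $A\in\mathcal F_T$,
\[
\sum_{a=1}^d \E_\nu[N_a(T)]\,\mathrm{KL}(\nu_a,\nu'_a)\ \geq\ \mathrm{kl}\bigl(\P_\nu(A),\P_{\nu'}(A)\bigr),
\]
where $\mathrm{KL}(\nu_a,\nu'_a)=2(\mu_a-\mu'_a)^2$ for these variance-$1/4$ Gaussians. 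Choosing $A$ to be the event that the arm which is optimal under $\nu'$ is pulled at least $T/2$ times, and using uniform efficiency (so that this arm is pulled $o(T^\alpha)$ times under $\nu$ for every $\alpha$ but $T-o(T^\alpha)$ times under $\nu'$), the right-hand side is asymptotically at least $\log T$. Writing $n_a:=\liminf_{T}\E_\nu[N_a(T)]/\log T$ and passing to the $\liminf$, each admissible alternative yields a linear constraint $\sum_a n_a\,\mathrm{KL}(\nu_a,\nu'_a)\geq 1$. Since $\liminf \Reg(T)/\log T\geq\sum_a\Delta_a n_a$, the theorem reduces to minimizing $\sum_a\Delta_a n_a$ over all $n\geq 0$ satisfying these constraints.

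The crux, where sparsity enters, is the choice of admissible alternatives: every $\nu'$ must again have exactly $s$ strictly positive means. This rules out the naive perturbation of a single bad arm and forces two families. (i) For a suboptimal good arm $i\in\{2,\dots,s\}$, raising $\mu_i$ just above $\mu_1$ preserves the sparsity pattern and gives $2\Delta_i^2\,n_i\geq 1$. (ii) Promoting a bad arm $j>s$ to become the new best must be compensated by demoting one good arm $i$ to $0$; this keeps the count of positive means and produces the coupled constraint $2\mu_i^2\,n_i+2\mu_1^2\,n_j\geq 1$. A delicate point is that uniform efficiency only bounds $\E_\nu[N_j]=o(T^\alpha)$ for bad arms, not $o(\log T)$, so the $n_j$ cannot be discarded; they remain genuine free variables, and the bound emerges precisely from the trade-off between paying for good-arm exploration and paying for bad-arm exploration.

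Solving the resulting program is then a finite-dimensional piecewise-linear exercise. By symmetry of the $d-s$ bad arms one may set $n_j\equiv\beta$, so the objective becomes $\sum_{i=2}^s\Delta_i n_i+(d-s)\mu_1\beta$ subject to $n_i\geq 1/(2\Delta_i^2)$ and $2\mu_i^2 n_i+2\mu_1^2\beta\geq 1$. At $\beta=0$ the active constraints give $n_i=\max\{1/(2\Delta_i^2),1/(2\mu_i^2)\}$, hence the bound \eqref{cor:LB0}. Raising $\beta$ relaxes the coupled constraints, saving $\mu_1^2\sum_i(\Delta_i/\mu_i^2)$ per unit on the good arms while costing $(d-s)\mu_1$; the marginal benefit is positive exactly when $\frac{d-s}{\mu_1}-\sum_i\frac{\Delta_i}{\mu_i^2}<0$, which is the dichotomy in the statement. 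In that regime one increases $\beta$ until the good arms leave the coupled constraint one by one, the strongest (largest $\mu_i$) first; the process halts at the index $k$ singled out by $\frac{d-s}{\mu_1}-\sum_{i=k}^s\frac{\Delta_i}{\mu_i^2}<0$, where $\beta^\star=\frac{1}{2\mu_1^2}\bigl(1-\mu_k^2/\Delta_k^2\bigr)$. Substituting $\beta^\star$ back, the three groups of arms --- those pinned by $n_i=1/(2\Delta_i^2)$, those on the coupled constraint with $n_i=\mu_k^2/(2\mu_i^2\Delta_k^2)$, and the bad arms contributing $(d-s)\mu_1\beta^\star$ --- reproduce exactly the three sums of \eqref{cor:LB1}.

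The main obstacle I anticipate is justifying that these two families of perturbations are exhaustive, i.e.\ that no other sparsity-preserving alternative produces a strictly stronger constraint; this amounts to checking that demoting more than one good arm, or promoting a bad arm only partway, is always dominated. A secondary technical point is the interchange of the $\liminf$ in $T$ with the limit in the perturbation magnitude $\epsilon\to 0$, handled as in the classical argument by first fixing $\epsilon$, letting $T\to\infty$, and then letting $\epsilon\downarrow 0$ so that the divergences converge to the advertised values $2\Delta_i^2,2\mu_i^2,2\mu_1^2$.
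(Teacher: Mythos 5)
Your proposal follows essentially the same route as the paper: the same change-of-measure constraint (the paper cites the Kaufmann et al.\ / Lagr\'ee et al.\ form of the transportation inequality), the same two families of sparsity-preserving alternatives yielding the constraints $2\Delta_i^2 c_i\geq 1$ and $2\mu_i^2 c_i+2\mu_1^2 c_j\geq 1$, and the same piecewise-linear resolution of the resulting program with the dichotomy on the sign of $\frac{d-s}{\mu_1}-\sum_i\frac{\Delta_i}{\mu_i^2}$. Your one worry about exhaustiveness of the perturbation families is immaterial for validity (dropping constraints only lowers the infimum, so the bound still holds; the paper explicitly concedes possible loss of tightness), so the argument is correct as it stands.
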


\textbf{Remarks.} Since the decision maker has more knowledge on the parameters of the problem than in the classical multi-armed bandit problem, we expect the lower bound to be less than the traditional one (without the sparsity assumption), which is 
$$
\liminf_{T\to \infty}\frac{\operatorname{Reg}(T)}{\log(T)} \geq \sum_{\substack{i\in [d]\\\Delta_i>0}}^{} \frac{1}{2\Delta_i}\ .
$$
Indeed, since the $\max$ is smaller than the sum, in the first case, we have
$$ \sum_{\substack{i\in [s]\\\Delta_i>0}}^{}\max\Big\{ \frac{1}{2\Delta_i},\frac{\Delta_i}{2\mu_i^2}\Big\} \leq  \sum_{\substack{i\in [s]\\\Delta_i>0}}^{} \frac{1}{2\Delta_i} + \sum_{\substack{i\in [s]\\\Delta_i>0}}^{} \frac{\Delta_i}{2\mu_i^2}\leq   \sum_{\substack{i\in [s]\\\Delta_i>0}}^{} \frac{1}{2\Delta_i} +\frac{d-s}{\mu_1}= \sum_{\substack{i\in [d]\\\Delta_i>0}}^{} \frac{1}{2\Delta_i}\ .
$$
Similarly, in the second case, we get that
\begin{align*}
 \sum_{i=1}^{k} 
\frac{1}{2\Delta_i} + \sum_{i=k+1}^{s} 
\frac{\mu_{k}^2}{\mu_i^2}\frac{\Delta_i}{2\Delta_{k}^2}  +\frac{(d-s)}{2\mu_1}\left(1-\frac{\mu_{k}^2}{\Delta_{k}^2}\right) & =
 \sum_{\substack{i\in [d]\\\Delta_i>0}}^{} \frac{1}{2\Delta_i} - \frac{\mu_k^2}{\Delta_k^2}\Big(\underbrace{\frac{d-s}{2\mu_1}-\sum_{i=k+1}^s \frac{\Delta_i}{2\mu_i^2}}_{> 0}\Big)-\sum_{i=k+1}^s \frac{1}{2\Delta_i} .
\end{align*}
Moreover, if $\frac{\Delta_s}{\mu_s^2} > \frac{d-s}{\mu_1}$, then both lower bounds match. Stated otherwise, the sparsity assumption is irrelevant as soon as
$$
\mu_s \leq \mu_1 \frac{-1 + \sqrt{1+4(d-s)}}{2(d-s)} \simeq \frac{\mu_1}{\sqrt{d-s}}\ .
$$
 
\begin{proof}
The proof relies on changes of measure arguments originating from \cite{graves1997asymptotically}.
First, consider the set of changes of distributions that modify the
best arm without changing the marginal of the best arm in the original 
sparse bandit problem:
\[ \mathcal{B}(\mu)=\left\{ \mu'\in \mathcal{S}(d,s)\,\middle|\,\mu_1'=\mu_1
\text{ and }\max_{i\in [d]}\mu_i<\max_{i\in [d]}\mu'_i  \right\}.  
\]
Concretely, if one considers an alternative sparse bandit model $\mu'$ such 
that  one
of the originally null arms becomes the new best arm, then one of the
originally non-null arms in $\mu$ must be taken to zero in $\mu'$ in order 
to keep the
sparse structure of the problem.

	In general, the equivalent of Th.~17 in \citep{kaufmann2015complexity}
	or Proposition~3 in \citep{lagree2016multiple} can be stated in our case
	as follows: For all changes of measure $\mu' \in \mathcal{B}(\mu)$,

\begin{equation}
\label{eq:constraint}
\liminf_{T\to \infty} \dfrac{\sum_{i=1}^{d} 2 \mathbb{E}
	[N_i(T+1)](\mu_i-\mu'_i)^2}{\log(T)} \geq 1.  	
\end{equation}
Details on this type of informational lower bounds can be found in \cite{garivier2016explore} and references therein.
Now, following general ideas from \cite{graves1997asymptotically} and
lower bound techniques from
\cite{lagree2016multiple} and \cite{combes2015combinatorial}, we may give a
variational form of the lower bound on the regret satisfying the above 
constraint.

\begin{equation}
\label{eq:opt_problem}
\liminf_{T\to \infty} \frac{\operatorname{Reg}(T)}{\log (T)} \geq \inf_{c\in \mathcal{C}} \sum_{i\in [d]}^{}c_i\Delta_i ,
\end{equation}
where the set $\mathcal{C}$ corresponds to constraints that are directly implied by Eq.\eqref{eq:constraint} above:

\[ \mathcal{C}=\left\{ (c_i)_{i\in [d]}\in \mathbb{R}_+^d\,\middle|\, 
\forall \mu'\in \mathcal{B}(\mu),\ 2\sum_{i\in [d]}^{}c_i(\mu_i-\mu_i')^2\geqslant 1 \right\}.  \]

We aim at obtaining a lower bound of the infimum from Eq.\eqref{eq:opt_problem}.
In this constrained optimization problem, the constraints set is
huge because every change of measure $\mu'\in\mathcal{B}(\mu)$ must satisfy 
\eqref{eq:constraint}. On
the other side, relaxing some constraints -- or considering only a
subset of $\mathcal{B}(\mu)$-- simply allows to reach even lower 
values 
\footnote{At that point, we may lose the optimality of the finally 
obtained lower bound but this is a
price we accept to pay in order to obtain a computable solution. }.

\begin{figure}
\centering
\includegraphics[width=0.7\linewidth]{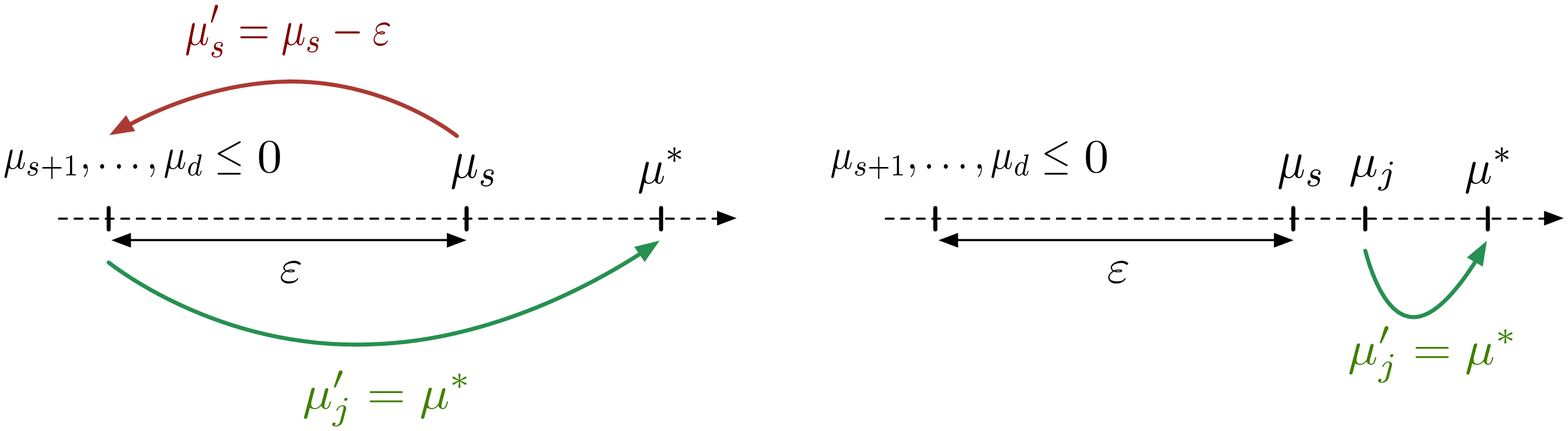}
\caption{Illustration of the various changes of distribution considered in $\mathcal{B}(\mu)$}
\label{fig:changes}
\end{figure}

We  consider $\tilde{\mathcal{C}}$ defined as
\[ \tilde{\mathcal{C}}=\left\{ (c_i)_{i\in [d]}\in \mathbb{R}_+^d\,\middle|\,
\text{for all $i \in [s]\setminus \left\{ 1 \right\}$ and $j\in [d]\setminus [s]$},\quad 
\begin{aligned}
&c_i\Delta_i^2\geqslant 1/2\\
&c_j\mu_1^2+c_i\mu_i^2\geqslant 1/2
\end{aligned}
   \right\}  \]
and we prove that $\mathcal{C}$ is a subset of $\tilde{\mathcal{C}}$, 
namely that there are more acceptable vectors of coefficients in $\mathcal{C}$ allowing us to reach lower values of the argument.

Let $(c_i)_{i\in [d]}\in \mathcal{C}$ and let us prove that it belongs to $\tilde{\mathcal{C}}$. 
Let $i\in [s]\setminus \left\{ 1 \right\}$,  $\gamma>0$ and consider $\mu^{(i,\gamma)}\in \mathbb{R}^d$ defined as the following modification of $\mu$:
\[ \mu_k^{(i,\gamma)}=
\begin{cases}
\mu_1+\gamma &\text{if $k=i$}\\
\mu_k&\text{otherwise},
\end{cases}\quad k\in [d]. \]
We easily see that $\mu^{(i,\gamma)}$ belongs to $\mathcal{B}(\mu)$. Therefore, by definition of $\mathcal{C}$, $(c_i)_{i\in [d]}$ satisfies:
\[ \sum_{k\in [d]}^{} c_k(\mu_k-\mu_k^{(i,\gamma)})^2\geqslant \frac{1}{2}, \]
which, by definition of $\mu^{(i,\gamma)}$ boils down to $c_i(\Delta_i+\gamma)^2\geqslant 1/2$. 
This being true for all $\gamma>0$, we have $c_i\Delta_i^2\geqslant 1/2$. The first condition in the defintion of $\tilde{\mathcal{C}}$ is then satisfied.

Similary, for $i\in [s]\setminus \left\{ 1 \right\}$, $j\in [d]\setminus [s]$ and $\gamma>0$ we consider $\mu^{(i,j,\gamma)}\in \mathbb{R}^d$ defined by:
\[ \mu_k^{(i,j,\gamma)}=
\begin{cases}
0 &\text{if $k=i$}\\
\mu_1+\gamma &\text{if $k=j$}\\
\mu_k&\text{otherwise},
\end{cases}\quad k\in [d]. \]
$\mu^{(i,j,\gamma)}$ also belongs to $\mathcal{B}(\mu)$. By definition of $\mathcal{C}$, $(c_i)_{i\in [d]}$ satisfies:
\[ \sum_{k\in [d]}^{} c_k(\mu_k-\mu_k^{(i,j,\gamma)})^2\geqslant \frac{1}{2}, \]
which boils down to $c_i\mu_i^2+c_j\Delta_j^2\geqslant 1/2$ (after taking the infinimum over $\gamma>0$). Since $\Delta_j=\mu_1$, the second condition in the definition of $\tilde{\mathcal{C}}$ is satisfied. We have proved that $(c_i)_{i\in [d]}$ belongs to $\tilde{\mathcal{C}}$ and consequently that
 $\mathcal{C}$ is a subset of $\tilde{\mathcal{C}}$. Therefore,
\[ \liminf_{T\to +\infty}\frac{\operatorname{Reg}(T)}{\log (T)}\geqslant \inf_{c\in \mathcal{C}}\sum_{i\in [d]}^{}c_i\Delta_i\geqslant \inf_{c\in \tilde{\mathcal{C}}}\sum_{i\in [d]}^{}c_i\Delta_i. \]
The computation of the optimization problem over $\tilde{\mathcal{C}}$ is deferred to Appendix~\ref{ap:LBproof}.
\end{proof}

\begin{remark}
		This is a linear optimization problem under inequality constraints so 
there exist algorithmic methods such as the celebrated Simplex 
algorithm \cite{dantzig2016linear} to compute a numeric solution of it. 
Nonetheless, in our case, it is 
possible to give an explicit solution.
\end{remark}


\section{Sparse UCB}
\label{sec:algorithm}
%

\subsection{The \algosparse~ algorithm}

The \algosparse~algorithm is formally defined in Algorithm~\ref{algo}
but let us first provide   an
informal description. The algorithm can be in different \emph{phases}
(denoted $\mathfrak{r},\mathfrak{f}$ and $\mathfrak{u}$), depending on
past observations, and its behavior radically changes from one phase to another. At each time $t\geqslant 1$, the variable
$\omega(t)\in \left\{ \mathfrak{r},\mathfrak{f},\mathfrak{u} \right\}$
will specify the phase the algorithm is in. The variable is not useful for the algorithm itself, but will be handy for reference in the analysis. We now describe the different phases.



\begin{description}
\item[Round-robin]{The algorithm starts with a \emph{round-robin} phase, which corresponds to $\omega(t)=\mathfrak{r}$. Each of the $d$ arms is pulled once successively.

Then, for each time $t\geqslant d+1$, the following sets are defined:
\begin{align*}
\mathcal{J}(t)&:=\left\{ i\in [d]\,\middle|\,\overline{X}_i(N_i(t))\geqslant 2\sqrt{\frac{\log (N_i(t))}{N_i(t)}} \right\},\\ 
\mathcal{K}(t)&:=\left\{ i\in [d]\,\middle|\, \overline{X}_i(N_i(t))\geqslant 2\sqrt{\frac{\log (t)}{N_i(t)}} \right\}.
\end{align*}
We will refer to the arms in $\mathcal{J}(t)$ as the \emph{active arms} and those in $\mathcal{K}(t)$ as \emph{active and sufficiently sampled}.
  
If there are less than $s$  active arms, i.e., $|\mathcal{J}(t)|< s$,  the algorithm
enters a \emph{round-robin} phase, and pulls each arm successively. This implies that $\omega(t)=\mathfrak{r}$ for the
next $d$ stages.}
\item[Force-log]{If there are at least $s$ active arms, but less than $s$ sufficiently sampled arms ($|\mathcal{K}(t)| < s $), the algorithm enters a \emph{force-log} phase ($\omega(t)=\mathfrak{f}$). In this phase, the algorithm pulls any arm in the set $\mathcal{J}(t)\setminus \mathcal{K}(t)$.}
\item[UCB]{If the set $\mathcal{K}(t)$ contains at least $s$ arms, the
    algorithm enters a \emph{UCB} phase
    ($\omega(t)=\mathfrak{u}$). The algorithm selects an arm in
    $\mathcal{K}(t)$ according to the UCB rule, i.e.\ it chooses the
    arm $i\in \mathcal{K}(t)$ which maximizes the quantity:
\[ \overline{X}_i(N_i(t))+2\sqrt{\frac{\log (t)}{N_i(t)}}. \]}
\end{description}
The pseudo-code of the whole procedure is given in Algorithm~\ref{algo} and
the skeleton in Figure~\ref{fig:skeleton}.

\begin{algorithm}[H]
  \caption{{\algosparse}}
  \label{algo}
  \SetKw{Init}{Initialization:}
  
  \DontPrintSemicolon 
  \KwIn{the total number of arms $d$ and the number of arms with
          positive means $s$}
  \Init{$t\gets 1$}\;~\\
        \For(\tcc*[f]{round-robin}){$k=1\dots d$}{
          $I(t)\gets k$\; ;$\quad$
          $\omega(t)\gets \mathfrak{r}$\; ;$\quad$
          $t\gets t+1$\; ;
        }
  \While{$t \leq T$} {
          Compute $\mathcal{J}(t) \gets \left\{ i\in [d]\,\middle|\, \overline{X}_i(N_i(t))\geq 2\sqrt{\frac{\log (N_i(t))}{N_i(t)}} \right\}$ \; ~ \\Compute $\mathcal{K}(t) \gets \left\{ i\in [d]\,\middle|\, \overline{X}_i(N_i(t))\geq 2\sqrt{\frac{\log (t)}{N_i(t)}} \right\}$ \; ~ \\
          \uIf{$\vert \mathcal{J}(t) \vert < s$}{
            \For(\tcc*[f]{round-robin}){$k=1\ldots d$}{
              $I(t) \gets k$\; ;$\quad$
              $\omega(t)\gets \mathfrak{r}$\; ;$\quad$
              $t \gets t+1$\; ;
            }
          }
          \uElseIf(\tcc*[f]{force-log}){$\left| \mathcal{K}(t) \right| <s$}{
                $I(t)\in  \mathcal{J}(t)\setminus \mathcal{K}(t)$\; ;$\quad$
                $\omega(t)\gets \mathfrak{f}$\; ;$\quad$
                $t\gets t+1$\; ;
          }
          \Else(\tcc*[f]{UCB}){
            $I(t) \in \arg\max_{i\in \mathcal{K}(t)}\left\{   \overline{X}_i(N_i(t)) + 2\sqrt{\frac{\log(t)}{N_i(t)}} \right\}$\; ;$\quad$
$\omega(t)\gets \mathfrak{u}$\; ;$\quad$
            $t\gets t+1$\; ; }}
\end{algorithm}

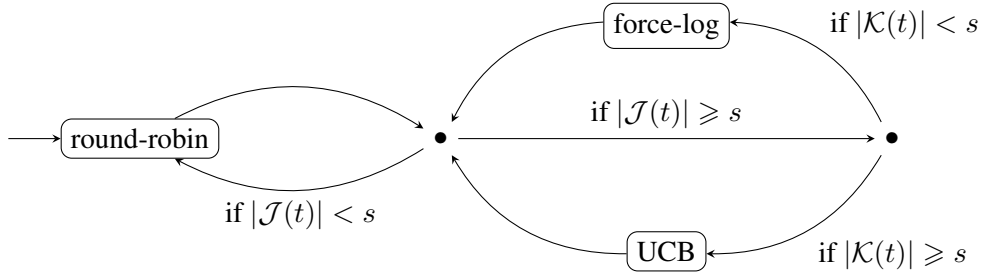
\begin{figure}
  \centering
\begin{tikzpicture}[>=stealth,box/.style={draw,rounded corners}]

 \node[box] (round-robin) at (0,0) {round-robin};
 \node (n1) at (4,0) {$\bullet$};
 \node (n2) at (10,0) {$\bullet$};
 \node[box] (force-log) at (7,1.5) {force-log};
 \node[box] (UCB) at (7,-1.5) {UCB};

 \draw [->] (round-robin)[right] to[bend left]  (n1);
 \draw [->] (n1)[right] to[bend left] node[below]{if $\left|
     \mathcal{J}(t) \right| <s$} (round-robin);
 \draw[->] (n1) to node[above]{if $\left| \mathcal{J}(t) \right| \geqslant s$} (n2) ;

 \draw[->] (n2) to[bend right] node[above right]{if $\left| \mathcal{K}(t) \right| <s$} (force-log);
 \draw[->] (n2) to[bend left] node[below right]{if $\left| \mathcal{K}(t) \right| \geqslant s$} (UCB);
 \draw[->] (force-log) to [bend right] (n1);
 \draw[->] (UCB) to [bend left] (n1);
 \draw[->] (-1.75,0) to (round-robin);
\end{tikzpicture}
\caption{Skeleton of the \algosparse~algorithm. Each $\bullet$ corresponds to a conditional statement, where each departing arrow corresponds to a condition (which is written midway).}
\label{fig:skeleton}
\end{figure}

The broad idea is that the algorithm should quickly identify the $s$
\emph{good} arms, and then pull those arms according to an
\emph{UCB} rule (or, alternatively, any other policy). At the end, only those $s$ \emph{good} arms would be pulled an infinite number of times.

The set $\mathcal{J}(t)$ of active arms is defined in such a way that
the expected number of pulls needed for a good arm to become active
is finite. Therefore, only a finite number (in expectation) of
\emph{round-robin} phases is needed for all $s$ good arms to become
active (see Lemma~\ref{lm:round-robin}). 

Reciprocally, a bad arm (with non-positive mean) is only pulled while active, that is a finite number of times in expectation. The main issue occurs when a bad arm happens to be active. In that case, the delay between two successive pulls of an active null arm typically increases exponentially fast because the regret scales with $\log(t)$. Consequently, it would take an exponential number of stages for this arm to become inactive again. And this could be dramatic for the regret if the best arm was, at the same time, inactive, as the regret would increase by a fixed constant of at least $\Delta_2$ on all those stages.
The purpose of the \emph{force-log} phases is to make sure that each active arm gets pulled
sufficiently often so that the expected number of steps a bad arm
remains active is finite. If the best arm happened to be inactive, then the number of active arms would drop
below $s$, and performing a \emph{round-robin} phase would allow it to quickly become active again.


\begin{theorem}
\label{thm:regret-bound}
The \algosparse~algorithm guarantees
\[\mathbb{E}\left[   \operatorname{Reg}(T) \right] \lesssim \log(T) \sum_{\substack{i\in [s]\\\Delta_i>0}} \left( \frac{1}{\Delta_i}+\frac{\Delta_i}{\mu_i^2} \right),  \]
where notation $\lesssim$ removes universal multiplicative constants
and additive data-dependant constants. The detailed statement can be
found in Appendix~\ref{sec:analys-algosp-algor}.
\end{theorem}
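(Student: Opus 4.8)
The plan is to bound the expected number of pulls $\mathbb{E}[N_i(T+1)]$ of every arm and substitute into the identity $\mathbb{E}\left[\operatorname{Reg}(T)\right]=\sum_{i=1}^d \Delta_i\,\mathbb{E}[N_i(T+1)]$. I would first split each count according to the phase in which the pull occurred, $N_i = N_i^{\mathfrak{r}}+N_i^{\mathfrak{f}}+N_i^{\mathfrak{u}}$ for round-robin, force-log and UCB, and then treat the three phases and the three types of arms (the optimal arm with $\Delta_i=0$, the good suboptimal arms, the bad arms) separately. The round-robin contribution is absorbed into the additive constant using Lemma~\ref{lm:round-robin}: once all $s$ good arms have accumulated enough samples to enter and remain in $\mathcal{J}(t)$, the trigger $|\mathcal{J}(t)|<s$ fires only finitely often in expectation, so $\sum_i \Delta_i\,\mathbb{E}[N_i^{\mathfrak{r}}]$ is a data-dependent constant. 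The arm with $\Delta_i=0$ contributes nothing regardless of how often it is pulled.

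For a bad arm $j\in[d]\setminus[s]$ (so $\mu_j\le 0$) I would use that it is pulled outside round-robin only while active. A pull at state $N_j(t)=n$ forces $j\in\mathcal{J}(t)$, i.e. $\overline{X}_j(n)\ge 2\sqrt{\log(n)/n}$, and since each state $n$ is left immediately after the pull that reaches $n+1$, each $n$ accounts for at most one such pull. Hence $\mathbb{E}[N_j^{\mathfrak{f}}+N_j^{\mathfrak{u}}]\le \sum_{n\ge 1}\mathbb{P}[\overline{X}_j(n)\ge 2\sqrt{\log(n)/n}]$, and applying the sub-Gaussian tail $\mathbb{P}[\overline{X}_j(n)-\mu_j\ge a]\le e^{-na^2/2}$ with $a=2\sqrt{\log(n)/n}$ and $\mu_j\le 0$ bounds the $n$-th term by $n^{-2}$. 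The series converges, so the total regret from bad arms is an additive constant and produces no $\log(T)$ term, consistent with the stated bound summing only over good suboptimal arms.

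The core of the argument is the good suboptimal arms ($i\in[s]$, $\Delta_i>0$). Their force-log pulls are bounded by the complementary condition $i\notin\mathcal{K}(t)$, namely $\overline{X}_i(n)<2\sqrt{\log(t)/n}\le 2\sqrt{\log(T)/n}$ at state $N_i(t)=n$, so $\mathbb{E}[N_i^{\mathfrak{f}}]\le \sum_{n\ge 1}\mathbb{P}[\overline{X}_i(n)<2\sqrt{\log(T)/n}]$; splitting the sum at $n\simeq 16\log(T)/\mu_i^2$ — where the lower terms are bounded by $1$ and the upper ones force the threshold below $\mu_i/2$, so the sub-Gaussian lower tail $e^{-n\mu_i^2/8}$ sums to $O(1)$ — gives $\mathbb{E}[N_i^{\mathfrak{f}}]=O(\log(T)/\mu_i^2)$ and hence the $\Delta_i\log(T)/\mu_i^2$ term. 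The UCB pulls follow the usual optimistic analysis: on the concentration event where the best arm's index exceeds $\mu_1$, a UCB pull of $i$ requires $\overline{X}_i(N_i)+2\sqrt{\log(t)/N_i}\ge \mu_1$, which caps $N_i\lesssim \log(T)/\Delta_i^2$, while the rare concentration failures sum to $O(1)$; this yields $\mathbb{E}[N_i^{\mathfrak{u}}]=O(\log(T)/\Delta_i^2)$ and the $\log(T)/\Delta_i$ term. Summing $\Delta_i\,\mathbb{E}[N_i]$ over the good suboptimal arms reproduces the claimed rate.

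The step I expect to be the main obstacle is making this UCB argument rigorous despite the restriction of the $\arg\max$ to $\mathcal{K}(t)$: the classical analysis implicitly assumes the optimal arm competes, whereas here UCB chooses only among active and sufficiently-sampled arms, and in principle the best arm could momentarily leave $\mathcal{K}(t)$. I would control this by noting that if the best arm drops out of $\mathcal{J}(t)$ then $|\mathcal{J}(t)|<s$ and a round-robin phase immediately re-samples it, so the number of such recovery episodes is governed by exactly the same summable concentration estimates used above and is $O(1)$ in expectation; and that the force-log mechanism guarantees every good arm is driven into $\mathcal{K}(t)$ after $O(\log(T)/\mu_i^2)$ pulls, so that on the complementary high-probability event the best arm sits in $\mathcal{K}(t)$ with a valid optimistic index. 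Quantifying the regret charged during these recovery windows and verifying that the force-log pulls precede the invocation of the UCB comparison is the delicate bookkeeping that the detailed proof in Appendix~\ref{sec:analys-algosp-algor} must carry out.
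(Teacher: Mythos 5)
Your decomposition is the same as the paper's: split pulls by phase (round-robin, force-log, UCB), treat bad arms via the "pulled only while active" observation, and your bounds for the round-robin, force-log, bad-arm, and standard-UCB pieces coincide with Lemmas~\ref{lm:round-robin}, \ref{lm:force-log}, \ref{lm:bad-arm-above} and \ref{lm:ucb} respectively, including the thresholds $16\log(T)/\mu_i^2$ and $16\log(T)/\Delta_i^2$ and the $n^{-2}$ tail for nonpositive-mean arms. The one place where your sketch has a genuine gap is exactly the step you flag as the main obstacle: UCB pulls made while arm $1$ is \emph{not} in $\mathcal{K}(t)$ (the events $V_i(t)$ in the paper). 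Your proposed patch is that "if the best arm drops out of $\mathcal{J}(t)$ then $|\mathcal{J}(t)|<s$ and a round-robin phase immediately re-samples it." This is false as stated: bad arms can be active, so $|\mathcal{J}(t)|$ can remain $\geq s$ with arm $1$ excluded. Moreover the dangerous case is not arm $1$ leaving $\mathcal{J}(t)$ but arm $1$ sitting in $\mathcal{J}(t)\setminus\mathcal{K}(t)$ while $|\mathcal{K}(t)|\geq s$; then neither round-robin nor force-log fires, and the UCB comparison runs without the optimal arm competing, so the optimistic inequality you invoke is unavailable.

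The paper closes this with a pigeonhole argument (property~(\ref{item:wrong-ucb}) of Lemma~\ref{lm:by-definition} and Lemma~\ref{lm:wrong-ucb}): if a UCB phase occurs with $1\notin\mathcal{K}(t)$, then $\mathcal{K}(t)$ contains at least $s$ arms drawn from $[d]\setminus\{1\}$, which holds only $s-1$ good arms, so some bad arm $j>s$ must satisfy $\overline{X}_j(N_j(t))\geq 2\sqrt{\log(t)/N_j(t)}$ — an event of probability at most $t^{-2}$ since $\mu_j\leq 0$. Summing over $t$ and over the bad arms bounds the total regret from these rounds by $d\Delta_s\pi^2/6$, an additive constant. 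You should replace your recovery-episode heuristic with this observation; with it, the rest of your argument goes through and matches the paper's proof.
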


\subsection{Sketch of the proof}
We decompose the event $\left\{ I(t)=i \right\}$ of a good arm $i\in
[s]$ being pulled at time $t\geqslant 1$ with respect to the different phases of \algosparse:
\begin{equation}
  \label{eq:decomposition}
  \left\{ I(t)=i \right\}=R_i(t)\sqcup F_i(t) \sqcup U_i(t) \sqcup V_i(t), \quad t\geqslant 1,
\end{equation}
where the different events are defined as follows:
\begin{itemize}
\item $R_i(t):=\left\{ I(t)=i,\ \omega(t)=\mathfrak{r} \right\}$ is
  event of arm $i$ being pulled at time $t$ during a \emph{round-robin} phase; 
\item  $F_i(t):=\left\{ I(t)=i,\ \omega(t)=\mathfrak{f} \right\}$ is
  the event of arm $i$ being pulled at time $t$ during a \emph{force-log} phase;
\item $U_i(t):=\left\{ I(t)=i,\ \omega(t)=\mathfrak{u},\ 1\in \mathcal{K}(t)
\right\}$ is the event of arm $i$ being pulled at time $t$ during a \emph{UCB}
phase while the optimal arm is active and sufficiently sampled; 
\item 
$V_i(t):=\left\{ I(t)=i,\ \omega(t)=\mathfrak{u},\ 1\not \in
  \mathcal{K}(t) \right\}$ is the event of arm $i$ being pulled at
time $t$ during a \emph{UCB} phase while the optimal arm is not active or not sufficiently sampled.
\end{itemize}

For the bad arms $i \in \{s+1,\ldots,d\}$, we consider a simpler decomposition. For $t\geqslant d+1$, we introduce $A_i(t):=\left\{ I(t)=i,\  i\in
  \mathcal{J}(t) \right\}$, which is the event of arm $i$ being pulled
at time $t$ while active, so that
\[ \left\{ I(t)=i \right\}=R_i(t)\sqcup A_i(t),\quad t\geqslant 1,    \]
see, e.g. Property (\ref{item:subset}) from Lemma~\ref{lm:by-definition}.
\bigskip

Using the above decompositions, we can write the regret as:
\begin{align*}
\mathbb{E}\left[ \operatorname{Reg}(T) \right]&=\sum_{\substack{i\in [d]\\\Delta_i>0}}\Delta_i\mathbb{E}\left[
  \sum_{t=1}^T\mathbbm{1}\left\{ R_i(t) \right\}
  \right]+\sum_{\substack{i\in [s]\\\Delta_i>0}}\Delta_i \mathbb{E}\left[
  \sum_{t=1}^T\mathbbm{1}\left\{ F_i(t) \right\}
                              \right]+\sum_{\substack{i\in [s]\\\Delta_i>0}}^{}\Delta_i\mathbb{E}\left[
                              \sum_{t=1}^T\mathbbm{1}\left\{ U_i(t)
                              \right\}  \right]    \\  
&\qquad\qquad  +\sum_{\substack{i\in [s]\\\Delta_i>0}}\Delta_i\mathbb{E}\left[ \sum_{t=1}^T\mathbbm{1}\left\{ V_i(t) \right\}  \right]+\sum_{i=s+1}^d\Delta_i\mathbb{E}\left[ \sum_{t=1}^T\mathbbm{1}\left\{ A_i(t) \right\}  \right].  
\end{align*}

We upper-bound independently the five above quantities. For the sake
of clarity, we only provide here the main ideas of proof. A detailed
analysis can be found in Appendix~\ref{sec:analys-algosp-algor}.

\setcounter{theorem}{5}

\begin{lemma}
  \label{lm:round-robin}
  The regret induced the \emph{round-robin} phases is controlled by:
  \[ \mathbb{E}\left[ \sum_{t=1}^{+\infty}\mathbbm{1}\left\{ R_i(t)
      \right\}\right]\leqslant 1+3s+ 8\sum_{j=1}^s\frac{1}{\mu_j^2}\left( 1+4\log \left(\frac{16}{\mu_j^2}\right) \right),\quad i\in [d].    \]
\end{lemma}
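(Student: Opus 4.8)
The plan is to reduce the bound to a counting argument, since $\sum_{t\geqslant 1}\mathbbm{1}\{R_i(t)\}$ is exactly the number $R$ of \emph{round-robin} phases executed over the infinite horizon: each such phase pulls every arm --- in particular arm $i$ --- precisely once, which is also why the stated bound does not depend on $i\in[d]$. The very first \emph{round-robin} is the initialization, accounting for the leading $1$. Every subsequent \emph{round-robin}, triggered at some time $t$, occurs because $|\mathcal{J}(t)|<s$; since there are exactly $s$ good arms, they cannot all be active, so at least one good arm $j\in[s]$ satisfies $\overline{X}_j(N_j(t))<2\sqrt{\log(N_j(t))/N_j(t)}$, i.e.\ is inactive.

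First I would charge each non-initial \emph{round-robin} to such an inactive good arm. Writing $t_1<t_2<\dots$ for the triggering times of the successive \emph{round-robin} phases, the key observation is that between $t_r$ and $t_{r+1}$ the $r$-th \emph{round-robin} pulls arm $j$ once, so $r\mapsto N_j(t_r)$ is strictly increasing for every fixed $j$. Hence, for each good arm $j$, the indices $r$ at which $j$ is inactive carry pairwise distinct pull-counts $N_j(t_r)$, each satisfying $\overline{X}_j(N_j(t_r))<2\sqrt{\log(N_j(t_r))/N_j(t_r)}$. A union bound over the $s$ good arms then gives, on every sample path,
\[ R\leqslant 1+\sum_{j=1}^{s} M_j, \qquad M_j:=\sum_{n=1}^{\infty}\mathbbm{1}\left\{\overline{X}_j(n)<2\sqrt{\tfrac{\log n}{n}}\right\}, \]
where $M_j$ depends only on the i.i.d.\ sample of arm $j$, since $\overline{X}_j(n)$ is a function of its first $n$ draws. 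Taking expectations reduces the whole problem to bounding $\mathbb{E}[M_j]=\sum_{n\geqslant 1}\mathbb{P}[\overline{X}_j(n)<2\sqrt{\log n/n}]$; finiteness of this quantity will also justify that $R<\infty$ almost surely.

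To control $\mathbb{E}[M_j]$ I would split the sum at the threshold where $2\sqrt{\log n/n}<\mu_j/2$, i.e.\ $\log n/n<\mu_j^2/16$. For the early indices where this fails I bound the probability by $1$; for the remaining indices the sub-Gaussian concentration inequality from Section~\ref{sec:framework} gives $\mathbb{P}[\overline{X}_j(n)<2\sqrt{\log n/n}]\leqslant\mathbb{P}[\overline{X}_j(n)-\mu_j<-\mu_j/2]\leqslant e^{-n\mu_j^2/8}$, whose geometric sum is at most $(e^{\mu_j^2/8}-1)^{-1}\leqslant 8/\mu_j^2$. The main --- and only genuinely delicate --- step is counting the early indices: since $\log n/n$ is decreasing for $n\geqslant 3$, the set $\{n:\log n/n\geqslant\mu_j^2/16\}$ is essentially an initial segment, and inverting the inequality shows it has at most $\tfrac{32}{\mu_j^2}\log(16/\mu_j^2)$ elements, up to an $O(1)$ correction per arm coming from the indices $n\in\{1,2\}$ where monotonicity fails and from rounding, which are collected into the additive $3s$. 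Adding the two contributions yields $\mathbb{E}[M_j]\leqslant \tfrac{8}{\mu_j^2}\bigl(1+4\log(16/\mu_j^2)\bigr)$, and summing over $j\in[s]$ together with the leading $1$ and the bookkeeping constants produces the claimed bound.
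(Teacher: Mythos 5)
Your proof is correct and follows essentially the same route as the paper's: charge each non-initial round-robin phase to an inactive good arm, use the fact that the relevant pull-counts $N_j$ are strictly increasing across successive round-robin triggers to reduce to $\sum_{n}\mathbb{P}[\overline{X}_j(n)<2\sqrt{\log n/n}]$, and split that sum at $n\approx\tfrac{32}{\mu_j^2}\log(16/\mu_j^2)$ with the sub-Gaussian tail bound handling the remainder. The only difference is cosmetic — you count round-robin phases directly where the paper counts pulls of a fixed arm $i$ — and the constants are assembled identically.
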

\textbf{Main argument of proof.}
The algorithm performs a \emph{round-robin} phase only if less than
$s$ arms are active, thus necessarily when one of the good arms $j\in [s]$ is not active. This implies that 
$\overline{X}_j(N_j(t))<2\sqrt{\frac{\log (N_j(t))}{N_j(t)}}$. The probability of
this happening decreases exponentially fast and as a consequence, the expected number of \emph{round-robin} phases is bounded.
\hfill $\Box$
\begin{lemma}
  \label{lm:force-log}
  The regret induced by a good arm $i\in [s]$ during \emph{force-log} phases is controlled by:
  \[ \mathbb{E}\left[ \sum_{t=1}^{T} \mathbbm{1}\left\{ F_i(t)
      \right\}  \right]\leqslant \frac{16\log (T)+8}{\mu_i^2}.  \]
\end{lemma}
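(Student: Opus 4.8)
The plan is to exploit that a force-log pull of arm $i$ certifies that arm $i$ is \emph{active but not sufficiently sampled}. Indeed, on $F_i(t)$ the algorithm pulls an arm of $\mathcal{J}(t)\setminus\mathcal{K}(t)$, so $i\notin\mathcal{K}(t)$; unfolding the definition of $\mathcal{K}(t)$ and using $t\leq T$ gives, on $F_i(t)$,
\[ \overline{X}_i(N_i(t)) < 2\sqrt{\frac{\log(t)}{N_i(t)}} \leq 2\sqrt{\frac{\log(T)}{N_i(t)}}. \]
I would then set the threshold $u_i := \frac{16\log(T)}{\mu_i^2}$ and split $\sum_{t=1}^T\mathbbm{1}\{F_i(t)\}$ according to whether $N_i(t)\leq u_i$ or $N_i(t)>u_i$, aiming to bound the two contributions by $\frac{16\log(T)}{\mu_i^2}$ and $\frac{8}{\mu_i^2}$ respectively.

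The contribution of the times with $N_i(t)\leq u_i$ is controlled by a deterministic counting argument. Every occurrence of $F_i(t)$ is in particular a pull of arm $i$, so between two force-log pulls of arm $i$ the counter $N_i$ increases by at least one; hence the values of $N_i(t)$ recorded at the force-log times of arm $i$ are pairwise distinct positive integers. At most $\lfloor u_i\rfloor\leq u_i=\frac{16\log(T)}{\mu_i^2}$ of them can be $\leq u_i$, which bounds this contribution.

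For the times with $N_i(t)>u_i$, I would convert the ``not sufficiently sampled'' inequality into a lower-deviation event. When $N_i(t)>u_i$ the displayed bound yields $\overline{X}_i(N_i(t))<2\sqrt{\log(T)/u_i}=\mu_i/2$, so $F_i(t)$ forces $\mu_i-\overline{X}_i(N_i(t))>\mu_i/2$. Since for each fixed integer $n$ there is at most one force-log time with $N_i(t)=n$, I would decouple the random sample size and pass to a union bound over deterministic $n$:
\[ \sum_{t=1}^T\mathbbm{1}\{F_i(t),\,N_i(t)>u_i\}\leq\sum_{n>u_i}\mathbbm{1}\{\overline{X}_i(n)<\mu_i/2\}, \]
where $\overline{X}_i(n)$ is the average of the first $n$ i.i.d.\ rewards of arm $i$. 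Taking expectations and applying the sub-Gaussian lower-tail bound $\mathbb{P}[\mu_i-\overline{X}_i(n)\geq\mu_i/2]\leq e^{-n\mu_i^2/8}$ leaves a geometric series $\sum_{n>u_i}e^{-n\mu_i^2/8}$; since $u_i\mu_i^2/8=2\log(T)$, summing it gives a bound of order $\frac{8}{\mu_i^2}T^{-2}\leq\frac{8}{\mu_i^2}$. Adding the two contributions produces exactly $\frac{16\log(T)+8}{\mu_i^2}$.

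The one step requiring care is the decoupling: because $N_i(t)$ is a data-dependent number of pulls, the concentration inequality of Section~\ref{sec:framework} cannot be applied to $\overline{X}_i(N_i(t))$ directly. The standard fix, used above, is that each value of the counter is attained at most once at a force-log time, so the random-index average can be replaced by a union bound over the fixed-index averages $\overline{X}_i(n)$, to each of which the tail bound applies. Beyond this, the argument is just the counting bound and a geometric tail sum, so I do not expect genuine difficulty.
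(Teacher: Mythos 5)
Your proposal is correct and follows essentially the same route as the paper's proof: both reduce $F_i(t)$ to the event $\overline{X}_i(N_i(t))<2\sqrt{\log(t)/N_i(t)}$ with $i$ pulled, replace $\log(t)$ by $\log(T)$, use the fact that the counter values at these pull times are distinct to pass to a union bound over deterministic sample sizes $n$, and then split at the threshold $16\log(T)/\mu_i^2$ with a sub-Gaussian tail plus geometric series for the remainder. Your explicit counting argument for the $N_i(t)\leq u_i$ part is just a slightly more verbose version of the paper's ``bound the first $u_0$ probabilities by $1$'' step.
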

\textbf{Main argument of proof.}
Arm $i \in [s]$ is pulled during a \emph{force-log} phase if its empirical mean  is below $2\sqrt{\frac{\log (t)}{N_i(t)}}$.
Because arm $i$ has a positive mean, the probability of this happening
turns out to decrease exponentially, as soon as $N_i(t)\geqslant \frac{16\log (T)}{\mu_i^2}$. 
Therefore, the expected number of times arm $i$ is pulled during a
\emph{force-log} phase is bounded by $16\frac{\log (T)}{\mu_i^2}$ plus a
constant term.
\hfill $\Box$

\begin{lemma}
  \label{lm:ucb}
The regret induced by a good arm $i\in [s]$ during \emph{UCB} phases, while $1 \in \mathcal{K}(t)$, is controlled by
  \[ \mathbb{E}\left[  \sum_{t=1}^T\mathbbm{1}\left\{ U_i(t) \right\}  \right]\leqslant \frac{16\log (T)+8}{\Delta_i^2}+3.  \]
\end{lemma}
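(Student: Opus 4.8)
The plan is to follow the classical optimistic analysis of \cite{auer2002finite}, adapted to the anytime confidence width $2\sqrt{\log(t)/N_i(t)}$ used by \algosparse. Fix a good arm $i\in[s]$ with $\Delta_i>0$ and write $B_j(t):=\overline{X}_j(N_j(t))+2\sqrt{\log(t)/N_j(t)}$ for the index of arm $j$. On the event $U_i(t)$ the algorithm is in a \emph{UCB} phase with $1\in\mathcal{K}(t)$, so both arm $1$ and arm $i$ lie in $\mathcal{K}(t)$ and the selection rule guarantees
\[ \overline{X}_i(N_i(t)) + 2\sqrt{\frac{\log t}{N_i(t)}} \geq \overline{X}_1(N_1(t)) + 2\sqrt{\frac{\log t}{N_1(t)}}, \]
i.e.\ $B_i(t)\geq B_1(t)$. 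Setting the threshold $u_i:=\lceil 16\log(T)/\Delta_i^2\rceil$, I would first split $\sum_{t=1}^T \mathbbm{1}\{U_i(t)\}\leq u_i+\sum_{t=1}^T \mathbbm{1}\{U_i(t),\,N_i(t)\geq u_i\}$, the term $u_i$ already producing the leading $16\log(T)/\Delta_i^2$ contribution (up to the $+1$ from rounding).

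Next I would exploit the threshold: when $N_i(t)\geq u_i$ the bonus obeys $2\sqrt{\log(t)/N_i(t)}\leq \Delta_i/2$ (since $\log t\leq \log T$). Feeding this into the index comparison shows that $\{U_i(t),\,N_i(t)\geq u_i\}$ forces at least one of two deviations: either arm $i$ \emph{overestimates}, $\overline{X}_i(N_i(t))\geq \mu_i+\Delta_i/2$, or the optimal arm \emph{underestimates}, $B_1(t)<\mu_1$. Indeed, if both failed then $B_i(t)<\mu_i+\Delta_i=\mu_1\leq B_1(t)$, contradicting $B_i(t)\geq B_1(t)$. It then remains to bound the expected number of occurrences of each deviation event by a constant.

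For the overestimation term I would use a count-indexing argument: since $I(t)=i$ on $U_i(t)$, the value $N_i(t)$ strictly increases between successive pulls of arm $i$, so each integer $n\geq u_i$ is realized at most once. Hence this count is at most $\sum_{n\geq u_i}\mathbbm{1}\{\overline{X}_i(n)\geq \mu_i+\Delta_i/2\}$, and the sub-Gaussian bound $\mathbb{P}[\overline{X}_i(n)-\mu_i\geq \Delta_i/2]\leq e^{-n\Delta_i^2/8}$ gives, after summing the geometric series, an expectation at most $\sum_{n\geq 1}e^{-n\Delta_i^2/8}\leq 8/\Delta_i^2$ — precisely the extra $8/\Delta_i^2$ in the statement.

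The hard part will be the optimal-arm underestimation term $\sum_t \mathbbm{1}\{B_1(t)<\mu_1\}$, because $N_1(t)$ does \emph{not} increase while arm $i$ is being pulled, so the count-indexing trick is unavailable; a naive union over all values $N_1(t)\in\{1,\dots,t\}$ costs a factor $t$ and leaves the divergent tail $\sum_t t^{-1}$. The remedy is a peeling (dyadic) argument over the scales of $N_1(t)$: the width $2\sqrt{\log(t)/n}$ is calibrated so that, for each fixed $t$, the failure probability over a geometric block of counts is $O(t^{-2})$, while only $O(\log t)$ such blocks occur up to time $t$. This yields a per-$t$ bound $O(t^{-2}\log t)$, hence a convergent series $\sum_{t\geq 1}O(t^{-2}\log t)$ contributing only an absolute constant. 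Collecting the three pieces — $u_i$, the $8/\Delta_i^2$ overestimation bound, and the constant underestimation bound — gives $\mathbb{E}\!\left[\sum_t \mathbbm{1}\{U_i(t)\}\right]\leq (16\log(T)+8)/\Delta_i^2+3$, as claimed.
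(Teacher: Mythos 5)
Your proof follows the paper's argument almost line for line: the index comparison on $U_i(t)$ (legitimate because $1\in\mathcal{K}(t)$ and arm $i$ wins the UCB rule over $\mathcal{K}(t)$), the threshold $\lceil 16\log(T)/\Delta_i^2\rceil$ whose rounding gives the ``$+1$'', the dichotomy ``arm $i$ overestimates by $\Delta_i/2$ or arm $1$'s index drops below $\mu_1$'', and the count-indexing plus geometric series yielding the $8/\Delta_i^2$ term. Where you depart from the paper is the underestimation term $\sum_t\P\bigl[\overline{X}_1(N_1(t))-\mu_1\leq -2\sqrt{\log(t)/N_1(t)}\bigr]$: the paper bounds each summand by $t^{-2}$ by applying the fixed-sample-size deviation bound directly to the random quantity $N_1(t)$, and you are right to flag that this step is not immediate and that a naive union over the $t$ possible values of $N_1(t)$ only yields $t^{-1}$.

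However, your proposed repair is quantitatively off as stated. With the width $2\sqrt{\log(t)/n}$ and the deviation bound $e^{-na^2/2}$, a fixed $n$ costs $t^{-2}$, but a \emph{dyadic} block $[2^k,2^{k+1})$ costs $t^{-1}$, not $t^{-2}$: writing $S_n=\sum_{\tau\leq n}(X_1(\tau)-\mu_1)$, the event forces $S_n\leq -2\sqrt{2^k\log(t)}$ for some $n<2^{k+1}$, and the maximal inequality gives $\exp\bigl(-4\cdot 2^k\log(t)/(2\cdot 2^{k+1})\bigr)=t^{-1}$. Summing over the $O(\log t)$ blocks and then over $t$ gives $\sum_t t^{-1}\log t$, which diverges, so your claimed per-step rate $O(t^{-2}\log t)$ is not achievable with ratio-$2$ peeling. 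The argument does close if you peel with ratio $1+\eta$ for any fixed $\eta<1$: each block then costs $t^{-2/(1+\eta)}$ and there are $\log(t)/\log(1+\eta)$ blocks, so the per-step bound $t^{-2/(1+\eta)}\log(t)/\log(1+\eta)$ is summable. This rescues the lemma's qualitative conclusion (a $T$-independent remainder), though the resulting absolute constant is larger than the ``$+3$'' obtained from $1+\pi^2/6$ in the paper's (less careful) computation.
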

\textbf{Main argument of proof.} The proof basically follows the steps of the classic UCB analysis by \cite{auer2002finite}. \hfill $\Box$

\begin{lemma}
  \label{lm:wrong-ucb}
 The regret induced by good arms during \emph{UCB} phases, while $1 \not\in \mathcal{K}(t)$, is controlled by:
  \[ \sum_{i\in [s]}^{}\Delta_i\mathbb{E}\left[ \sum_{t=1}^{T}\mathbbm{1}\left\{ V_i(t) \right\}  \right]\leqslant \frac{d\Delta_s\pi^2}{6}.  \]
\end{lemma}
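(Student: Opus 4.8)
The plan is to reduce the weighted sum to a count of ``bad'' UCB steps and then to show that such steps are rare because they force a bad arm to look good. First I would use that the gaps are nondecreasing, $\Delta_1\leq\cdots\leq\Delta_s$ (since $\mu_1\geq\cdots\geq\mu_s$), together with the fact that a single arm is pulled at each round, so that at most one of the events $V_i(t)$, $i\in[s]$, occurs at any given time $t$. This yields the pointwise bound $\sum_{i\in[s]}\Delta_i\mathbbm{1}\{V_i(t)\}\leq \Delta_s\,\mathbbm{1}\{\omega(t)=\mathfrak{u},\ 1\notin\mathcal{K}(t)\}$, so that it suffices to bound $\mathbb{E}\big[\sum_{t}\mathbbm{1}\{\omega(t)=\mathfrak{u},\,1\notin\mathcal{K}(t)\}\big]$ by $d\pi^2/6$ and then restore the factor $\Delta_s$.

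The key structural step is a pigeonhole argument. On the event $\{\omega(t)=\mathfrak{u}\}$ the algorithm is in a UCB phase, hence $|\mathcal{K}(t)|\geq s$. If moreover $1\notin\mathcal{K}(t)$, then $\mathcal{K}(t)$ contains $s$ arms none of which is arm $1$; since arm $1$ is one of only $s$ good arms, $\mathcal{K}(t)$ must contain at least one \emph{bad} arm $j\in\{s+1,\dots,d\}$. Therefore $\{\omega(t)=\mathfrak{u},\,1\notin\mathcal{K}(t)\}\subseteq\bigcup_{j=s+1}^{d}\{j\in\mathcal{K}(t)\}$, and it is enough to control, for each bad arm $j$, the time spent with $j$ active and sufficiently sampled.

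Next comes the concentration estimate. For a bad arm $j$ we have $\mu_j\leq 0$, and $\{j\in\mathcal{K}(t),\,N_j(t)=n\}$ forces $\overline{X}_j(n)\geq 2\sqrt{\log(t)/n}$. Since at least $n$ pulls have occurred before time $t$ we have $t>n$, whence $\overline{X}_j(n)-\mu_j\geq\overline{X}_j(n)\geq 2\sqrt{\log(n)/n}$, an upward deviation of the empirical mean of a nonpositive arm. By the sub-Gaussian tail bound recalled in Section~\ref{sec:framework} its probability is at most $e^{-2\log n}=n^{-2}$. Summing this level-by-level gives $\sum_{n\geq1}n^{-2}=\pi^2/6$ per bad arm, and summing over the $d-s\leq d$ bad arms produces the announced factor $d\pi^2/6$.

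The main obstacle is the passage from ``probability that level $n$ of arm $j$ is ever sufficiently sampled'' to the genuine \emph{count} of bad UCB steps. Indeed, while a bad arm $j$ sits in $\mathcal{K}(t)$ it need not be pulled -- a good arm is pulled, which is exactly what $V_i(t)$ records -- so $N_j(t)$ can stay frozen, and a single favourable deviation of $\overline{X}_j$ could a priori generate many bad UCB steps before the threshold $2\sqrt{\log(t)/N_j(t)}$ grows past $\overline{X}_j$; this is precisely the exponential-delay phenomenon flagged in the informal description of the algorithm. The crux is thus to show that this frozen regime cannot persist. I expect the resolution to combine the inclusion $\mathcal{K}(t)\subseteq\mathcal{J}(t)$ (valid because $t>N_i(t)$ makes the $\log(t)$ threshold larger than the $\log(N_i(t))$ one) with the sampling guarantees underlying Lemmas~\ref{lm:force-log} and~\ref{lm:round-robin}, which ensure that an active arm is pulled often enough that the threshold eventually expels a bad arm from $\mathcal{K}(t)$ after a controlled number of steps. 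Charging each bad UCB step to a fresh sample count $n$ through this mechanism is what keeps the per-level estimate $n^{-2}$ summable and yields the constant bound $d\Delta_s\pi^2/6$ rather than a spurious $\log(T)$ factor.
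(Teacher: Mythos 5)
Your first two reductions (replacing each $\Delta_i$ by $\Delta_s$, and the pigeonhole observation that a UCB phase with $1\notin\mathcal{K}(t)$ forces some bad arm $j\in\{s+1,\dots,d\}$ into $\mathcal{K}(t)$) coincide with the paper's proof, which invokes exactly Property (\ref{item:wrong-ucb}) of Lemma~\ref{lm:by-definition}. The gap is in your counting step, and you have in fact diagnosed it yourself without closing it. By weakening the threshold from $2\sqrt{\log(t)/N_j(t)}$ to $2\sqrt{\log(N_j(t))/N_j(t)}$ and summing the resulting estimate $n^{-2}$ over pull-count levels $n$, you only control the number of \emph{distinct values of $n$} at which $\overline{X}_j(n)$ exceeds its threshold, not the number of \emph{time steps} $t$ at which $j\in\mathcal{K}(t)$. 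Since $N_j(t)$ can stay frozen at a single value $n$ for many rounds while a good arm is being pulled (which is precisely what $V_i(t)$ records), a single favourable deviation at level $n$ would be charged many times, and your level-by-level sum does not bound the quantity in the lemma. Your final paragraph acknowledges this and proposes to repair it via the force-log and round-robin sampling guarantees, but no such argument is actually given, so the proof is incomplete as it stands.

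The repair is much simpler than the mechanism you anticipate, and it is what the paper does: keep the $\log(t)$ in the threshold and sum over $t$ rather than over $n$. For each fixed $t$, the sub-Gaussian deviation bound applied with $a=2\sqrt{\log(t)/N_j(t)}$ gives an exponent $-N_j(t)a^2/2=-2\log(t)$, i.e.\ a bound $t^{-2}$ that is uniform in the value of $N_j(t)$, frozen or not. Summing $t^{-2}$ over $t$ yields $\pi^2/6$ per bad arm and hence the stated bound $d\Delta_s\pi^2/6$, with no appeal to Lemmas~\ref{lm:round-robin} and~\ref{lm:force-log} and no need to argue that the frozen regime terminates.
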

\textbf{Main argument of proof.}
The algorithm performs a \emph{UCB} phase if the set $\mathcal{K}(t)$ has at least $s$ arms. If it does not contain the best arm $1$,  it must necessarily contain an arm $j\in \left\{ s+1,\dots,d \right\}$, i.e.\ with
nonpositive mean. As a consequence, the empirical mean of arm $j$ is
above $2\sqrt{\frac{\log (t)}{N_j(t)}}$. Because arm $j$ has a nonpositive
mean, the probability of this happening turns out to decrease as $t^{-1/2}$. Consequently, the expected number of times a good arm is pulled
during a \emph{UCB} phase while the best arm does not belong to
$\mathcal{K}(t)$ is finite. \hfill $\Box$

\begin{lemma}
  \label{lm:bad-arm-above}
  The regret induced by a bad arm $i\in \left\{ s+1,\dots,d
  \right\}$ while active is controlled by:
  \[ \mathbb{E}\left[ \sum_{t=1}^{+\infty}\mathbbm{1}\left\{ A_i(t) \right\}  \right]\leqslant \frac{\pi^2}{6}.  \]
\end{lemma}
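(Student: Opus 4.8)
The plan is to reindex the pulls of arm $i$ by their sample count rather than by time, which removes the dependence on the algorithm's adaptive choices. Observe that each time arm $i$ is pulled its counter $N_i$ increases by exactly one, so the values taken by $N_i(t)$ at the successive pull times of arm $i$ are distinct; in particular, for every $n\geq 1$ there is at most one time $t$ at which arm $i$ is pulled with $N_i(t)=n$. Moreover, whenever $A_i(t)$ occurs, arm $i$ is active, i.e.\ $i\in\mathcal{J}(t)$, which by definition of $\mathcal{J}(t)$ forces $\overline{X}_i(N_i(t))\geq 2\sqrt{\log(N_i(t))/N_i(t)}$. Combining these two facts, I would dominate the (random) number of active pulls by a sum over the deterministic sample indices:
\[ \sum_{t=1}^{+\infty}\mathbbm{1}\{A_i(t)\}\leq \sum_{n=1}^{+\infty}\mathbbm{1}\left\{\overline{X}_i(n)\geq 2\sqrt{\frac{\log n}{n}}\right\}, \]
where $\overline{X}_i(n)$ denotes the empirical mean of the first $n$ realizations of the fixed i.i.d.\ reward sequence of arm $i$, a quantity that does not depend on the policy.

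Next I would take expectations and bound each term. Since arm $i$ is bad, $\mu_i\leq 0$, so $\{\overline{X}_i(n)\geq 2\sqrt{\log n/n}\}\subseteq\{\overline{X}_i(n)-\mu_i\geq 2\sqrt{\log n/n}\}$. Applying the sub-Gaussian deviation inequality $\mathbb{P}[\overline{X}_i(n)-\mu_i\geq a]\leq e^{-na^2/2}$ recalled in Section~\ref{sec:framework}, with $a=2\sqrt{\log n/n}$, one gets $na^2/2=2\log n$, so each probability is at most $e^{-2\log n}=n^{-2}$ (and for $n=1$ the bound $1$ is trivially valid). Summing,
\[ \mathbb{E}\left[\sum_{t=1}^{+\infty}\mathbbm{1}\{A_i(t)\}\right]\leq\sum_{n=1}^{+\infty}\frac{1}{n^2}=\frac{\pi^2}{6}, \]
which is exactly the claimed bound.

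The only delicate point, and the step I would be most careful about, is the decoupling in the first display: one must argue that the adaptively chosen, random pull times can be replaced by a sum over the fixed indices $n$ without incurring any union-bound penalty. This works precisely because each index $n$ is attained at most once (the counter is strictly increasing along the pulls of arm $i$), so no single realization $\overline{X}_i(n)$ is ever tested more than once; the threshold $2\sqrt{\log n/n}$ built into $\mathcal{J}(t)$ is exactly calibrated so that the resulting tail $n^{-2}$ is summable. Everything else is a routine application of the concentration inequality.
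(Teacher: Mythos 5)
Your proof is correct and follows essentially the same route as the paper's: reindex the active pulls of arm $i$ by the strictly increasing sample counter $N_i(t)$, then apply the sub-Gaussian deviation bound with $\mu_i\leq 0$ to get the summable tail $n^{-2}$. The "delicate point" you flag is exactly the change-of-variable step the paper also performs, justified by the same monotonicity of the counter.
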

\textbf{Main argument of proof.}
Arm $i$ is active if its empirical mean is above $2\sqrt{\frac{\log (N_i(t))}{N_i(t)}}$. Because its mean is  nonpositive, this happens with a total  probability of the order of $\sum t^{-2}$. Therefore, the regret incurred when $i$ is active is bounded. \hfill $\Box$

\bigskip

It only remains to combine the above results, to  upper bound  the expected regret  as
\[  \mathbb{E}\left[  \operatorname{Reg}(T) \right] \lesssim \log (T)\sum_{\substack{i\in [s]\\\Delta_i>0}}\left(  \frac{1}{\Delta_i}+\frac{\Delta_i}{\mu_i^2} \right) +d\sum_{j=1}^s \frac{\mu_1\log(1/\mu_j^2)}{\mu^2_j},  \]
where we omitted multiplicative universal constants. We emphasize the fact that the last term is independent of $T$, hence the dominating term is 
\[
\mathbb{E}\left[  \Reg(T) \right] \lesssim  \log (T)\sum_{\substack{i\in [s]\\\Delta_i>0}} \max \left\{ \frac{1}{\Delta_i},\frac{\Delta_i}{\mu_i^2} \right\}.
\]


\section{Optimality, ranges of sparsity and  constants optimization}
\label{sec:optimality}
%

We prove in this section that the algorithm \algosparse~ is optimal, up to multiplicative factor, for a wide range of parameters. For this purpose, we recall the different bounds we obtained, up to multiplicative universal constants and additive data-dependent constants.

\subsection{Strong sparsity}
The regime of strong sparsity is attained when $\frac{d-s}{\mu_1}- \sum_{\substack{i\in [s]\\\Delta_i>0}}^{}\frac{\Delta_i}{\mu_i^2} > 0$. In that case, the lower bound of  Theorem~\ref{cor:LB} rewrites as 
$$
\liminf_{T\to \infty}\frac{\Reg(T)}{\log(T)} \gtrsim \sum_{\substack{i\in [s]\\\Delta_i>0}}^{}\max\Big\{ \frac{1}{\Delta_i},\frac{\Delta_i}{\mu_i^2}\Big\}
$$
while \algosparse~ suffers an expected regret bounded as
$$
\frac{\Reg(T)}{\log(T)} \lesssim  \sum_{\substack{i\in [s]\\\Delta_i>0}}^{} \frac{1}{\Delta_i} + \frac{\Delta_i}{\mu_i^2} \lesssim \sum_{\substack{i\in [s]\\\Delta_i>0}}^{}\max\Big\{ \frac{1}{\Delta_i},\frac{\Delta_i}{\mu_i^2}\Big\}\ .
$$
Obviously, \algosparse~ is optimal for all those values of parameters. More importantly, its regret scales linearly with $s$ and is independent of the number of arms with non-positive means.

\bigskip

The minimax regret of \algosparse~ is necessarily of the same order of UCB, as they achieve the same regret when  $\frac{d-s}{\mu_1}- \sum_{\substack{i\in [s]\\\Delta_i>0}}^{}\frac{\Delta_i}{\mu_i^2}$ is arbitrarily close to 0. So we consider instead the minimax regret with respect to the distributions with a \emph{relative sparsity} level bounded away from 0, i.e., such that for some $\theta \in (0,1)$, $\mu_s \geq \theta \mu_1$. In particular, this yields that $\frac{d-s}{s} >\frac{1-\theta}{\theta^2}$ so that the strong sparsity assumption is satisfied.

Then, for this class of parameters, it is quite straightforward to get that the minimax regret of UCB scales as $\sqrt{dT\log(T)}\sqrt{1+\frac{\theta}{1-\theta}\frac{s}{d}}$ while the minimax regret of \algosparse~ increases as $\sqrt{sT\log(T)}\sqrt{\frac{(1-\theta)^2}{\theta^2}+1}$. As a consequence, for any fixed class of parameters, the dependency in the number of arms in the minimax regret shrinks from $\sqrt{d}$ to $\sqrt{s}$

\subsection{Variants \& small improvements}
Of course, the minimax regret of \algosparse~exhibits an extra $\sqrt{\log(T)}$ term, which is due to the fact that we used UCB as a basic algorithm. In the order hand, we could have used instead of UCB, any  variant such as UCB-2, improved-UCB, ETC, MOSS... In the same line of thoughts, the threshold of the force-log phase could also be updated to $2\sqrt{\frac{\log(T/N_i(t))}{N_i(t)}}$ so that the term $\sqrt{\log(T)}$ can be replaced by $\sqrt{\log(s)}$, which gives a regret scaling in 
\begin{align*}
\Reg(T) \lesssim  \left\{\begin{array}{ll}\sum_{\substack{i\in [s]\\\Delta_i>0}}^{} \frac{\log(T\Delta_i^2)}{\Delta_i} + \frac{\log(T\mu_i^2)\Delta_i}{\mu_i^2} & \text{ in the distribution dependent sense}\\
 \sqrt{sT}\sqrt{\frac{(1-\theta)^2}{\theta^2}+1}\sqrt{\log(s) + \log(\frac{(1-\theta)^2}{\theta^2}+1)}
 & \text{ in the minimax sense}\end{array}\right. \end{align*}

Similarly, under some additional assumptions on the probility distribution at stake, one might use KL-UCB instead of UCB to replace the dependency in $\Delta_i\Big(\frac{1}{\Delta^2_i} + \frac{\Delta_i}{\mu_i^2}\Big)$ into $\Delta_i\Big(\frac{1}{KL(\mu_i,\mu^*)} + \frac{\Delta_i}{KL(0,\mu_i)}\Big)$ when this makes sense\footnote{Notice that the sparse bandit problem is trivial with Bernoulli distributions.}.

\bigskip

Another way to slightly improve the guarantees of the algorithm is to change the round-robin phases into sampling phases in which arms are not selected uniformly at random but with probability depending on the past performances of the different arms, as in \cite{bubeck2013bounded}. Unfortunately, this does not improve the leading term (in $T$) of the regret, but merely the terms uniformly bounded (in $T$).
%
%
%

\section{Experiments}
\label{sec:experiments}
%

This section aims at experimentally validating the theoretical results
we obtained. We empirically compare the regret of \textsc{UCB} and 
\algosparse~for various levels of sparsity: we either fix $d$ and $s$ and allow $\mu_s/\Delta_s$ to vary or conversely fix the expected returns and allow $s/d$ to vary. According to the conclusions of Section~\ref{sec:optimality}, we observe that for a range of settings, \algosparse~does behave near-optimally in the long run, up to multiplicative constants. We also see that even when \algosparse~is not optimal, it is still almost always preferable to \ucb~as soon as there is some sparsity in the problem. Without loss of generality, experiments are performed on problems for which $\mu_1=0.9$ and for $2\leq i \leq s$, all $\mu_i$'s are equal to $\mu_s=\mu_1-\Delta_s$. 

\subsection{Varying $\mu_s$}

We fix $d=15$ and $s=7$ such that the limit between weak and strong sparsity as defined in Section~\ref{sec:optimality} is reached at $\mu_s=0.4$. We allow $\Delta_s$ to vary in $[0.1,0.7]$. We compare the behavior of $\algosparse$~ and $\ucb$~ for these 
bandit problems, and we also compute and display the lower bound of Corollary~\ref{cor:LB}, that is for the smallest class of sparse of problems containing ours -- for $\varepsilon=\mu_s$. 

On Figure~\ref{fig:comp_delta} we present the expected regret averaged
over Monte-Carlo 100 repetitions for each experiment. When the sparsity of the
problem is not strong, that is when $\Delta_s=0.7$, \ucb~ has a lower regret than \algosparse~for a long time but the asymptotic behavior of the latter tends to show that \ucb~ will eventually be worse in the long run. However, when the sparsity gets stronger, \algosparse~ is much closer to optimal than \ucb~ and reaches a much lower regret.

\begin{figure}
		\includegraphics[width=.25\linewidth]{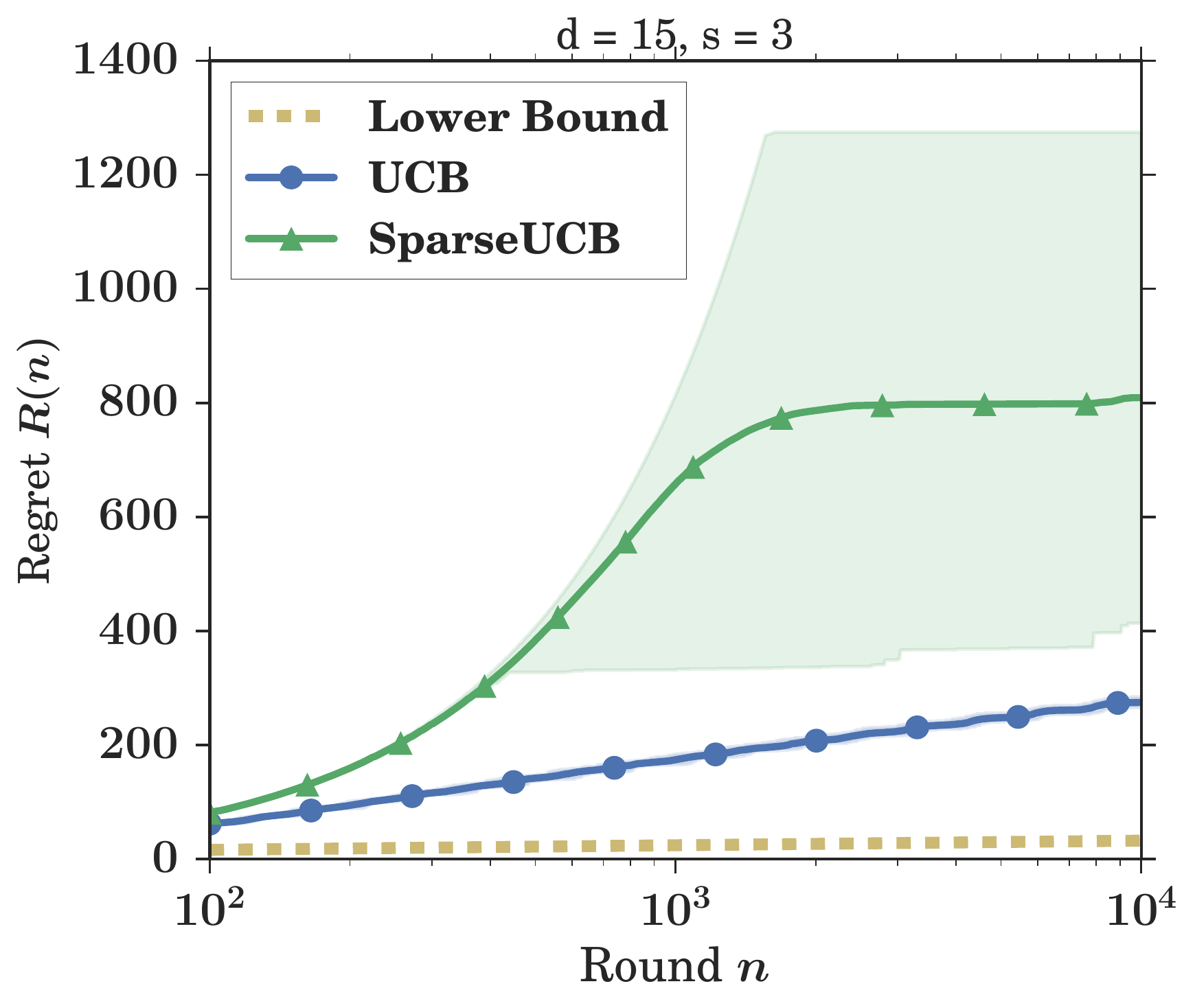}
		\includegraphics[width=.25\linewidth]{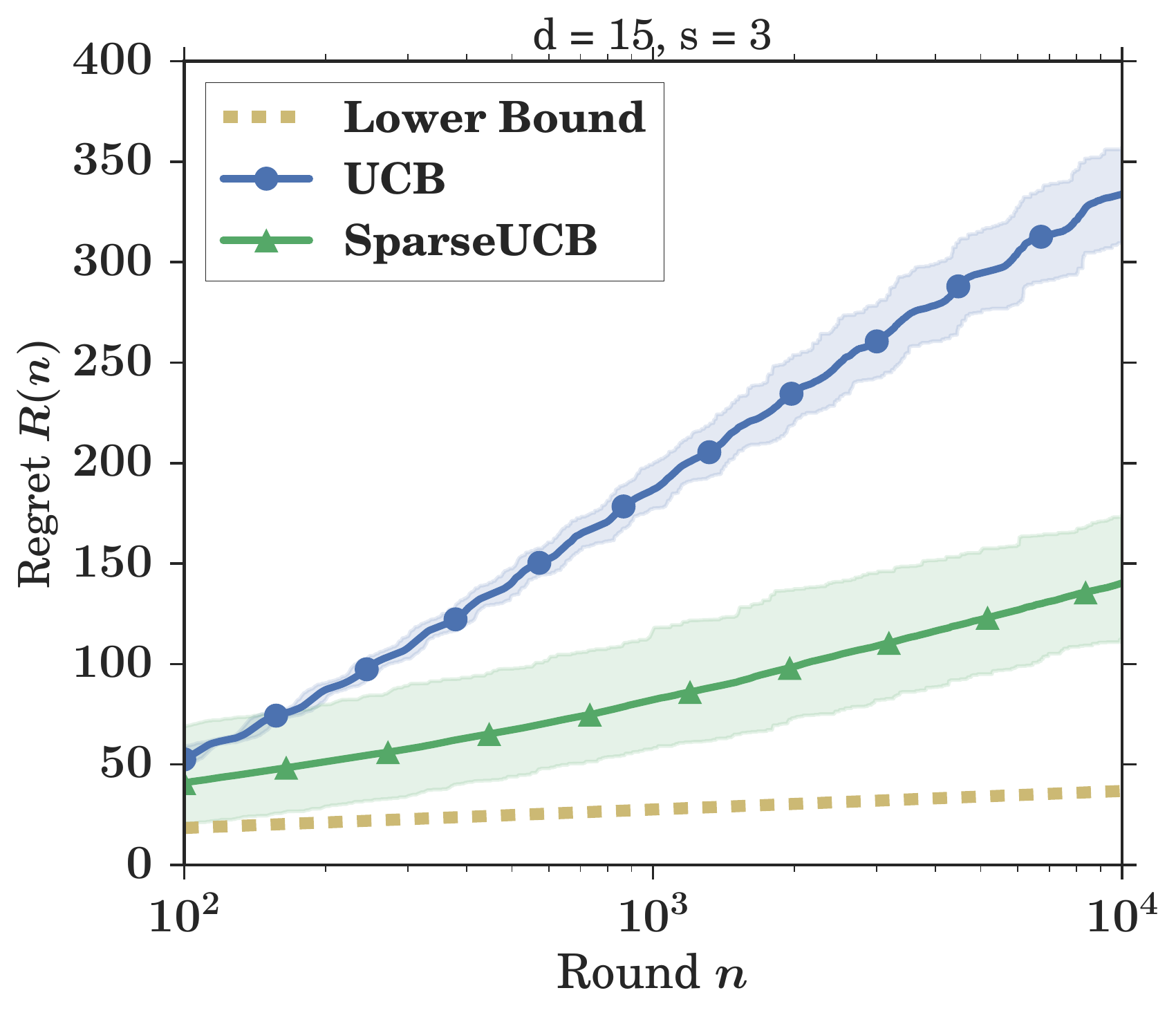}
		\includegraphics[width=.25\linewidth]{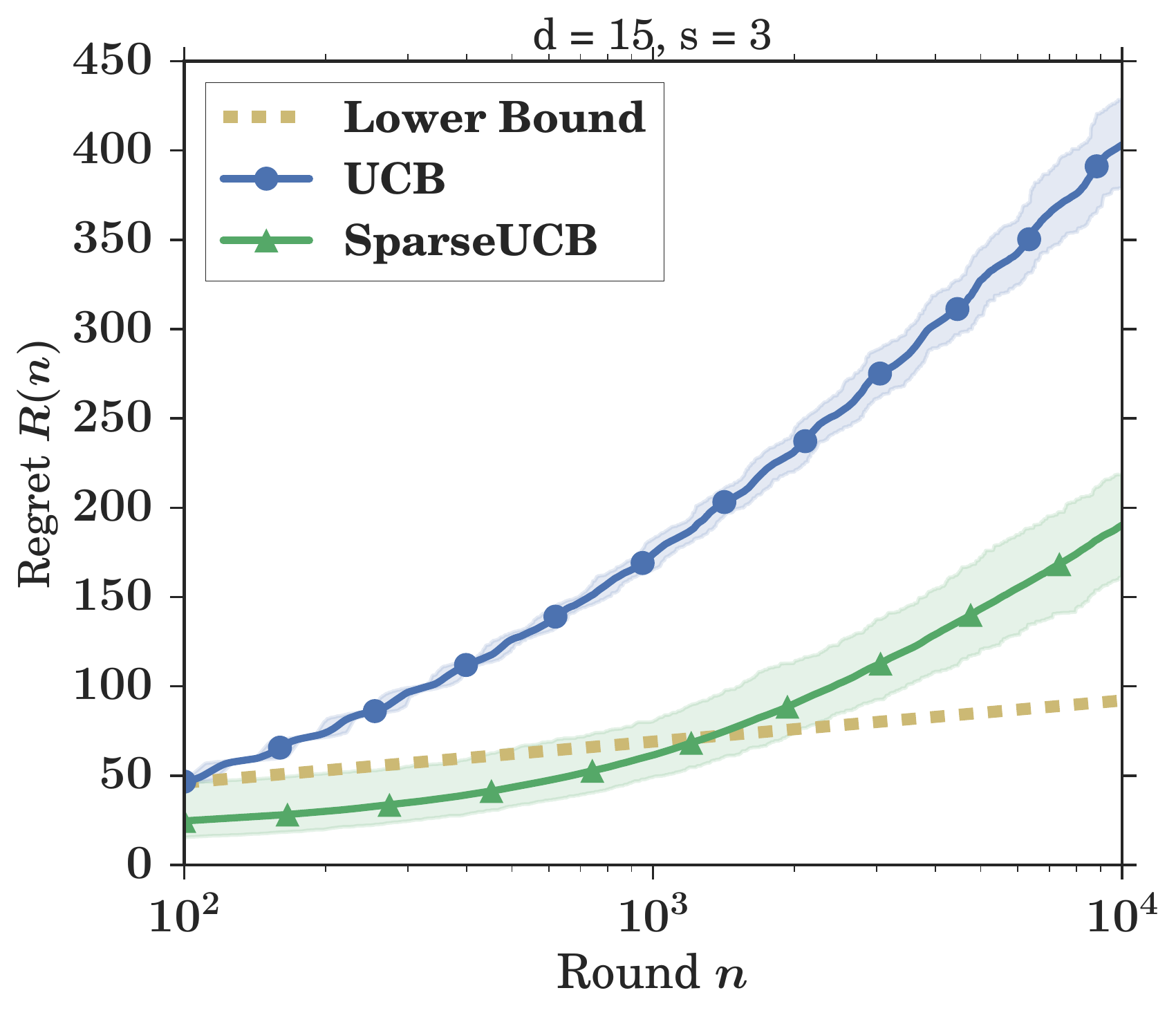}
	\caption{Cumulated regret of \ucb~ and \algosparse~ for $\mu^*=0.9$, $d=15$, $s=7$ and, from left to right, $\Delta_s=0.7,\,0.25, \,0.1$.} \label{fig:comp_delta}
\end{figure}

\subsection{Influence of the number of arms}

We now fix $d=15$, $\mu^*=0.9$ and $\Delta_s=0.3$ and we allow the number of effective arms $s$ to vary in $\{2,6,12\}$. Note that given the fixed parameters, the regime of weak sparsity defined in previous section only holds when $s>10$.

On Figure~\ref{fig:comp_arms} we present the expected regret averaged over 100 Monte-Carlo repetitions. Clearly, when $s=12$, we are in the weak sparsity regime and there is no real improvement brought by \algosparse~ as compared to the usual \ucb~policy. On the contrary, as $s$ gets smaller, \algosparse~gets closer and closer to optimal. 

\begin{figure}
		\includegraphics[width=.25\linewidth]{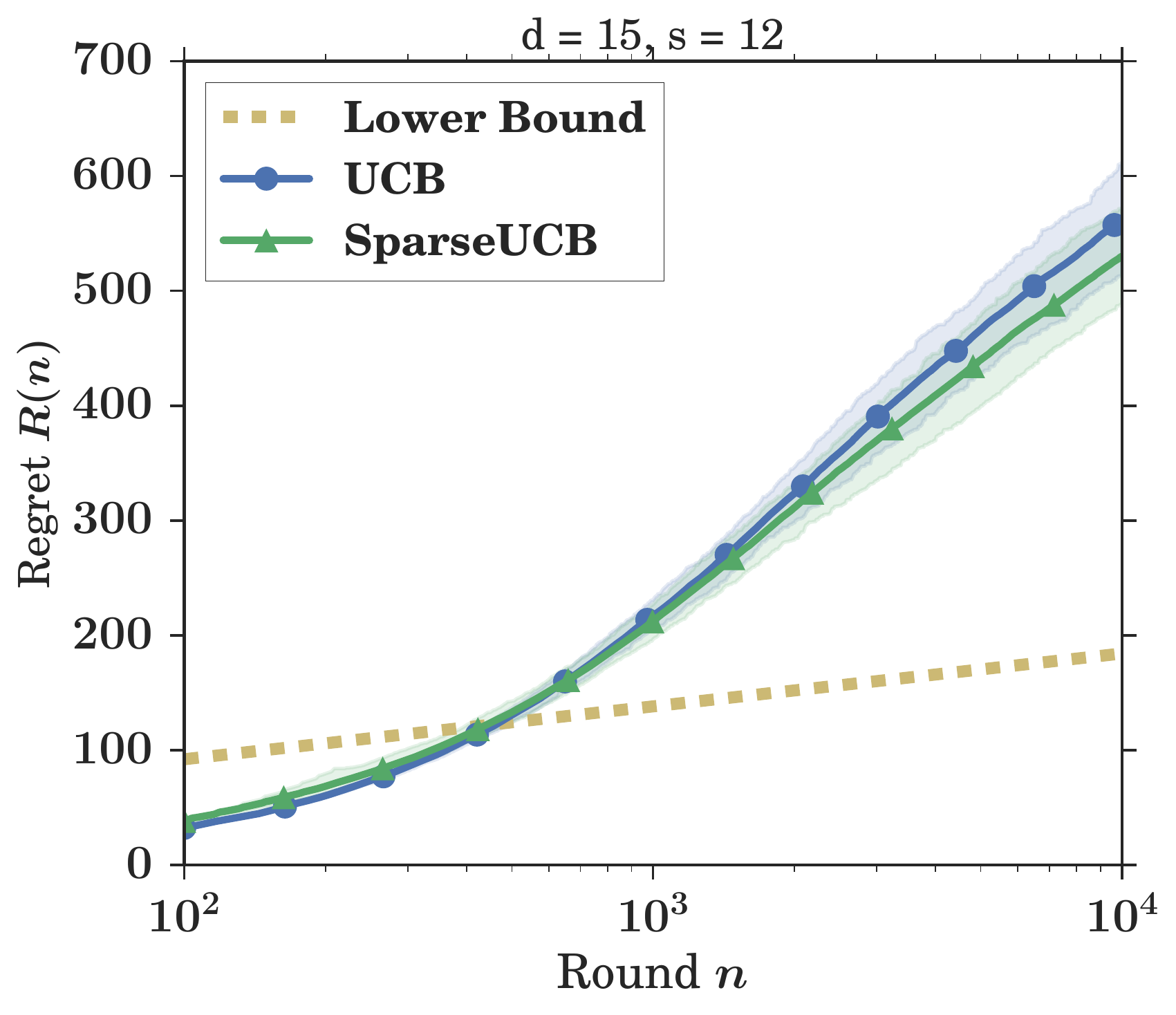}
		\includegraphics[width=.25\linewidth]{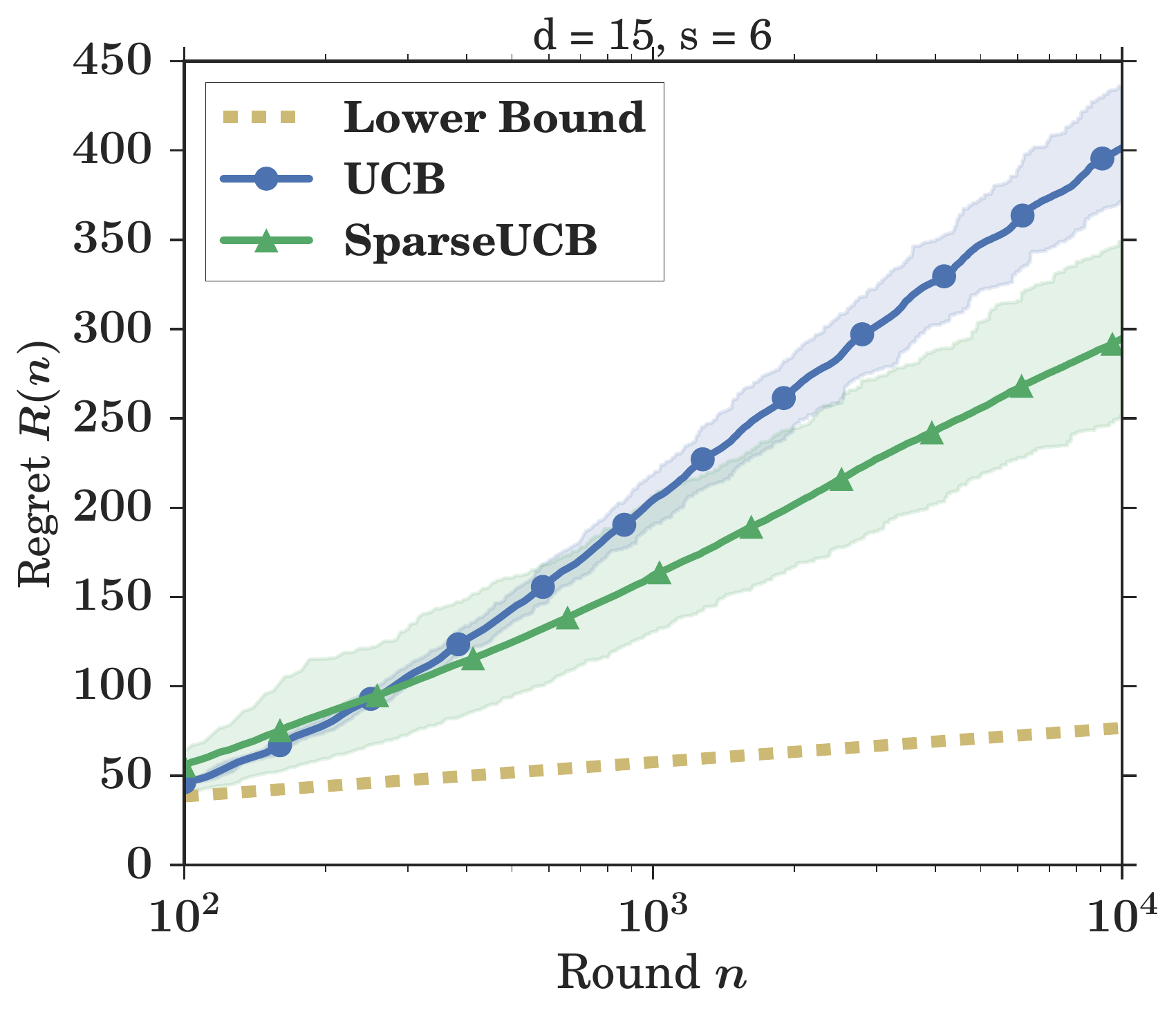}
		\includegraphics[width=.25\linewidth]{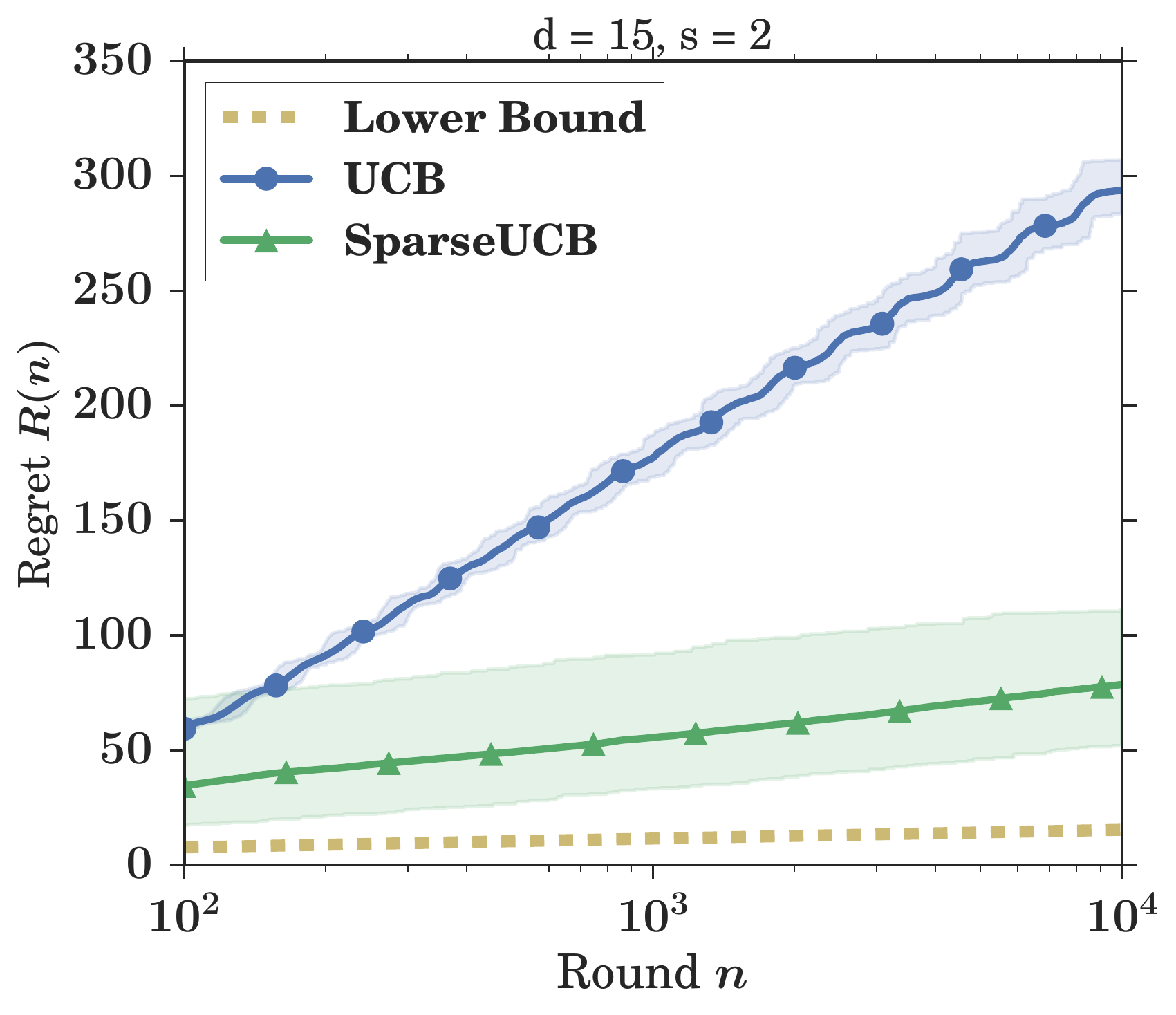}
	\caption{Cumulated regret of \ucb~ and \algosparse~ for $d=15$, $\mu=0.9$ and, from left to right, $s=12,6,2$.} \label{fig:comp_arms}
\end{figure}




\section{Conclusions and Open Questions}
\label{sec:conclusion}
%
We introduced a new variation of the celebrated stochastic multi-armed bandit problem that include an additional sparsity information on the expected return of the arms. 
We characterized the range of parameters that lead to interesting sparse problems and gave a lower bound on the regret that scales in $O(s\log(T))$ in the Strong Sparsity domain. 
We provide \algosparse~ that is a good alternative to the classical \ucb~ in the sparse bandit situation as it has both good theoretical guarantees and good empirical performances. However, we noticed in the experiments that for parameters lying in the Weak Sparsity domain, one would rather switch to the classical \ucb~ policy as the price of focusing on the $s$ best arms paid by \algosparse~ is too high as compared to the resulting improvement on the regret. 
Moreover, it appears in many real applications that the learner often knows the existence of $s$ without knowing its exact value and that leverages a new and unsolved stochastic sparse problem.


\bibliography{refscolt}

\begin{thebibliography}{19}
\providecommand{\natexlab}[1]{#1}
\providecommand{\url}[1]{\texttt{#1}}
\expandafter\ifx\csname urlstyle\endcsname\relax
  \providecommand{\doi}[1]{doi: #1}\else
  \providecommand{\doi}{doi: \begingroup \urlstyle{rm}\Url}\fi

\bibitem[Abbasi-Yadkori et~al.(2012)Abbasi-Yadkori, Pal, and
  Szepesvari]{abbasi2012online}
Yasin Abbasi-Yadkori, David Pal, and Csaba Szepesvari.
\newblock Online-to-confidence-set conversions and application to sparse
  stochastic bandits.
\newblock In \emph{AISTATS}, volume~22, pages 1--9, 2012.

\bibitem[Auer et~al.(2002)Auer, Cesa-Bianchi, and Fischer]{auer2002finite}
Peter Auer, Nicolo Cesa-Bianchi, and Paul Fischer.
\newblock Finite-time analysis of the multiarmed bandit problem.
\newblock \emph{Machine learning}, 47\penalty0 (2-3):\penalty0 235--256, 2002.

\bibitem[Bubeck and Cesa-Bianchi(2012)]{bubeck2012regret}
S{\'e}bastien Bubeck and Nicolo Cesa-Bianchi.
\newblock Regret analysis of stochastic and nonstochastic multi-armed bandit
  problems.
\newblock \emph{arXiv preprint arXiv:1204.5721}, 2012.

\bibitem[Bubeck et~al.(2013)Bubeck, Perchet, and Rigollet]{bubeck2013bounded}
S{\'e}bastien Bubeck, Vianney Perchet, and Philippe Rigollet.
\newblock Bounded regret in stochastic multi-armed bandits.
\newblock In \emph{COLT}, pages 122--134, 2013.

\bibitem[Burnetas and Katehakis(1996)]{burnetas1996optimal}
Apostolos~N Burnetas and Micha{\"e}l~N Katehakis.
\newblock Optimal adaptive policies for sequential allocation problems.
\newblock \emph{Advances in Applied Mathematics}, 17\penalty0 (2):\penalty0
  122--142, 1996.

\bibitem[Carpentier et~al.(2012)Carpentier, Munos,
  et~al.]{carpentier2012bandit}
Alexandra Carpentier, R{\'e}mi Munos, et~al.
\newblock Bandit theory meets compressed sensing for high dimensional
  stochastic linear bandit.
\newblock In \emph{AISTATS}, pages 190--198, 2012.

\bibitem[Cesa-Bianchi and Lugosi(2006)]{cesa2006prediction}
Nicolo Cesa-Bianchi and G{\'a}bor Lugosi.
\newblock \emph{Prediction, learning, and games}.
\newblock Cambridge university press, 2006.

\bibitem[Combes et~al.(2015)Combes, Shahi, Proutiere,
  et~al.]{combes2015combinatorial}
Richard Combes, Mohammad Sadegh Talebi~Mazraeh Shahi, Alexandre Proutiere,
  et~al.
\newblock Combinatorial bandits revisited.
\newblock In \emph{Advances in Neural Information Processing Systems}, pages
  2116--2124, 2015.

\bibitem[Dantzig(2016)]{dantzig2016linear}
George Dantzig.
\newblock \emph{Linear programming and extensions}.
\newblock Princeton university press, 2016.

\bibitem[Garivier et~al.(2017.To appear.)Garivier, M{\'e}nard, and
  Stoltz]{garivier2016explore}
Aur{\'e}lien Garivier, Pierre M{\'e}nard, and Gilles Stoltz.
\newblock Explore first, exploit next: The true shape of regret in bandit
  problems.
\newblock \emph{Mathematics of Operations Research}, 2017.To appear.

\bibitem[Gerchinovitz(2013)]{gerchinovitz2013sparsity}
S{\'e}bastien Gerchinovitz.
\newblock Sparsity regret bounds for individual sequences in online linear
  regression.
\newblock \emph{Journal of Machine Learning Research}, 14\penalty0
  (Mar):\penalty0 729--769, 2013.

\bibitem[Graves and Lai(1997)]{graves1997asymptotically}
Todd~L Graves and Tze~Leung Lai.
\newblock Asymptotically efficient adaptive choice of control laws in
  controlled markov chains.
\newblock \emph{SIAM journal on control and optimization}, 35\penalty0
  (3):\penalty0 715--743, 1997.

\bibitem[Kaufmann et~al.(2015)Kaufmann, Capp\'e, and
  Garivier]{kaufmann2015complexity}
\'Emilie Kaufmann, Olivier Capp\'e, and Aur\'elien Garivier.
\newblock On the complexity of best arm identification in multi-armed bandit
  models.
\newblock \emph{Journal of Machine Learning Research}, 2015.

\bibitem[Kwon and Perchet(2016)]{kwon2016gains}
Joon Kwon and Vianney Perchet.
\newblock Gains and losses are fundamentally different in regret minimization:
  the sparse case.
\newblock \emph{Journal of Machine Learning Research}, 17\penalty0
  (229):\penalty0 1--32, 2016.

\bibitem[Lagr{\'e}e et~al.(2016)Lagr{\'e}e, Vernade, and
  Capp{\'e}]{lagree2016multiple}
Paul Lagr{\'e}e, Claire Vernade, and Olivier Capp{\'e}.
\newblock Multiple-play bandits in the position-based model.
\newblock \emph{arXiv preprint arXiv:1606.02448}, 2016.

\bibitem[Lai and Robbins(1985)]{lai1985asymptotically}
Tze~Leung Lai and Herbert Robbins.
\newblock Asymptotically efficient adaptive allocation rules.
\newblock \emph{Advances in applied mathematics}, 6\penalty0 (1):\penalty0
  4--22, 1985.

\bibitem[Langford et~al.(2009)Langford, Li, and Zhang]{langford2009sparse}
John Langford, Lihong Li, and Tong Zhang.
\newblock Sparse online learning via truncated gradient.
\newblock \emph{Journal of Machine Learning Research}, 10\penalty0
  (Mar):\penalty0 777--801, 2009.

\bibitem[Lattimore et~al.(2015)Lattimore, Crammer, and
  Szepesv{\'a}ri]{lattimore2015linear}
Tor Lattimore, Koby Crammer, and Csaba Szepesv{\'a}ri.
\newblock Linear multi-resource allocation with semi-bandit feedback.
\newblock In \emph{Advances in Neural Information Processing Systems}, pages
  964--972, 2015.

\bibitem[Robbins(1985)]{robbins1985some}
Herbert Robbins.
\newblock Some aspects of the sequential design of experiments.
\newblock In \emph{Herbert Robbins Selected Papers}, pages 169--177. Springer,
  1985.

\end{thebibliography}

\appendix

\section{End of Proof of Lower Bound}
\label{ap:LBproof}
\setcounter{theorem}{1}

Recall Theorem~\ref{cor:LB}:
\begin{theorem}
	For a Gaussian sparse bandit problem $\nu =
	(\nu_1,\ldots,\nu_s,\nu_{s+1},\ldots,\nu_d) \in
	\mathcal{S}(d,s)\in \mathbb{R}^d$, an asymptotic lower bound on 
	the regret is given by the solution to the following linear optimization 
	problem:
	\begin{eqnarray}
	&  & f(\mu)\geq \quad \inf_{c\succeq 0} c_i \Delta_i\\
	s.t. & \quad\forall i\in \{2,...,s\},
	& 2 c_i \Delta_i^2\geq 1;\\
	& \forall i\in \{2,\ldots,s\},\, \forall j\in \{s+1,...d\},  & 2 c_j 
	\mu_1^2+ 2 c_i \mu_i^2 \geq 1\\
	& \forall i\in \{1,\ldots,d\}, & c_i \geq 0
	\end{eqnarray}
	whose solution can be computed explicitly and gives the following 
	problem-dependent lower bound:
	\begin{itemize}
		\item If $\frac{d-s}{\mu_1}- 
			\sum_{\substack{i\in [s]\\\Delta_i>0}}^{}\frac{\Delta_i}{\mu_i^2} > 0$,
		\begin{equation}
		\label{cor:LB0}
		\liminf_{T\to \infty}\frac{\operatorname{Reg}(T)}{\log(T)} \geq 
			\sum_{\substack{i\in [s]\\\Delta_i>0}}^{}\max\Big\{ \frac{1}{2\Delta_i},\frac{\Delta_i}{2\mu_i^2}\Big\}
		\end{equation}
		\item otherwise, there exist $k\leq s$ such that $\frac{d-s}{\mu_1}- 
		\sum_{i=k}^{s}\frac{\Delta_i}{\mu_i^2} < 0$, and the lower bound is 
		\begin{equation}
		\label{cor:LB1}
		\liminf_{T\to \infty}\frac{\operatorname{Reg}(T)}{\log(T)} \geq \sum_{\substack{i\in [k]\\\Delta_i>0}}^{} 
		\frac{1}{2\Delta_i} + \sum_{i=k+1}^{s} 
		\frac{\mu_{k}^2}{\mu_i^2}\frac{\Delta_i}{2\Delta_{k}^2}
		+\frac{(d-s)}{2\mu_1}
		\left(1-\frac{\mu_{k}^2}{\Delta_{k}^2}\right).
		\end{equation}
	\end{itemize}

\end{theorem}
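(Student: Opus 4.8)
The plan is to solve the linear program explicitly by successively reducing its dimension until only a single scalar variable remains, and then minimizing a convex piecewise-linear function of that scalar. First I would eliminate the variable $c_1$: its coefficient in the objective is $\Delta_1=0$ and it appears in none of the constraints, so it may be set to $0$. Next I would exploit the symmetry among the $d-s$ bad arms. Each bad arm $j$ has $\mu_j=0$, hence $\Delta_j=\mu_1$, and enters the objective and the constraints identically; for fixed values of the good-arm variables the constraint on $c_j$ reads $c_j\mu_1^2\geq \tfrac12-\min_i c_i\mu_i^2$, the same for every $j$. Consequently there is an optimal solution with $c_j\equiv b$ for all $j\in\{s+1,\dots,d\}$, and the objective becomes $\sum_{i=2}^{s}c_i\Delta_i+(d-s)\mu_1 b$. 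For fixed $b\geq 0$ the remaining minimization decouples across the good arms, and since the objective is increasing in each $c_i$ while the active constraints read $c_i\Delta_i^2\geq\tfrac12$ and $c_i\mu_i^2\geq\tfrac12-b\mu_1^2$, the optimal choice is $c_i=\max\bigl\{\tfrac{1}{2\Delta_i^2},\tfrac{1-2b\mu_1^2}{2\mu_i^2}\bigr\}$. Substituting reduces the whole problem to minimizing over $b\geq 0$ the univariate function
\[
g(b)=\sum_{i=2}^{s}\Delta_i\max\Bigl\{\tfrac{1}{2\Delta_i^2},\tfrac{1-2b\mu_1^2}{2\mu_i^2}\Bigr\}+(d-s)\mu_1 b.
\]

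Then I would analyze $g$, which is convex and piecewise linear, being a sum of maxima of affine functions plus a linear term. Each arm $i$ switches from the second branch to the first at the breakpoint $b_i=\tfrac{1}{2\mu_1^2}\bigl(1-\tfrac{\mu_i^2}{\Delta_i^2}\bigr)$; arms with $\mu_i\geq\mu_1/2$ have $b_i\leq 0$ and remain on the first branch for all $b\geq 0$, while for the remaining arms the ordering $\mu_2\geq\dots\geq\mu_s$ forces $b_i$ to increase with $i$, so the arms switch one at a time as $b$ grows. On the interval where exactly the arms $\{k+1,\dots,s\}$ sit on the second branch, the derivative is the constant $g'(b)=\mu_1(d-s)-\mu_1^2\sum_{i=k+1}^{s}\tfrac{\Delta_i}{\mu_i^2}$, which increases with $k$, confirming convexity and locating the minimizer at the sign change of $g'$.

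Finally I would split into the two announced regimes according to the sign of this derivative. In the strong-sparsity case the hypothesis $\tfrac{d-s}{\mu_1}>\sum_{i=2}^{s}\tfrac{\Delta_i}{\mu_i^2}$ dominates the smaller sum (over arms with $\mu_i<\mu_1/2$) appearing in $g'(0^+)$, so $g'(0^+)>0$ and the minimizer is $b^\ast=0$; evaluating $g(0)=\sum_{i=2}^{s}\max\{\tfrac{1}{2\Delta_i},\tfrac{\Delta_i}{2\mu_i^2}\}$ yields \eqref{cor:LB0}. Otherwise the minimizer is the largest breakpoint $b_k$ with $\tfrac{d-s}{\mu_1}<\sum_{i=k}^{s}\tfrac{\Delta_i}{\mu_i^2}$; there one has $1-2b_k\mu_1^2=\mu_k^2/\Delta_k^2$, and substituting this into the three pieces of $g(b_k)$ reproduces exactly the three summands of \eqref{cor:LB1}. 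The main obstacle I anticipate is the combinatorial bookkeeping of the piecewise-linear structure: verifying that the breakpoints are correctly ordered, that the minimizing $k$ falls in the range where arms $k+1,\dots,s$ all lie on the second branch, and that the announced sign condition matches the derivative sign change — the stated inequality being sufficient rather than exactly the boundary, since arms with $\mu_i\geq\mu_1/2$ never contribute to $g'$. Justifying the bad-arm symmetry reduction cleanly and handling the boundary between the two regimes are the remaining points requiring care.
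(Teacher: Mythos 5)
Your proposal is correct and follows essentially the same route as the paper's proof: both reduce the LP to a one-dimensional problem by giving all bad arms a common coefficient (your $b$ is the paper's $\lambda/(2\mu_1^2)$), write each good-arm coefficient as $\max\{1/(2\Delta_i^2),(1-2b\mu_1^2)/(2\mu_i^2)\}$, and locate the optimum by the sign of the resulting piecewise-linear objective's slope, with the switching index $k$ determined by the ordered breakpoints $1-\mu_i^2/\Delta_i^2$. Your explicit convexity/derivative argument is in fact a slightly more careful rendering of the paper's case analysis, and your observation that arms with $\mu_i\geq\mu_1/2$ never enter the slope corresponds to the paper's set $S^*$.
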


\begin{proof}
We now solve the following linear programming in 
	order to obtain the given explicit form for the Lower Bound. 	
	\begin{eqnarray*}
	&  & f(\mu)\geq \quad \inf_{c\succeq 0} c_i \Delta_i\\
	s.t. & \quad\forall i\in \{2,...,s\},
	& 2 c_i \Delta_i^2\geq 1;\\
	& \forall i\in \{2,\ldots,s\},\, \forall j\in \{s+1,...d\},  & 2 c_j 
	\mu_1^2+ 2 c_i \mu_i^2 \geq 1\\
	& \forall i\in \{1,\ldots,d\}, & c_i \geq 0
	\end{eqnarray*}
	
	First, remark that  for all the 
	best arms $i\in \{1,\ldots,s\}$, we must have $c_i\geq 1/2\Delta_i^2$ 
	so if $\mu_i^2/\Delta_i^2 \geq 1$, the $d-s$ corresponding constraints 
	on suboptimal $c_j,j>s$ are empty. We define 
	$S^*:=\left\{  i\in [s]\,\middle|\,\mu_i^2/\Delta_i^2 \geq 1 \right\}$. It remains $s-\vert S^*\vert$ 
	constrains on each $c_j$ for $j>s$:
	\[
	c_j \geq \max_{i\in [s]\setminus S^*} 
	\frac{1-2c_i\mu_i^2}{2\mu_1^2} =: 
	\frac{\lambda}{2 \mu_1^2}
	\]
	It remains to properly identify $\lambda$ as a function of the 
	parameters of the problem. Because of the first set of constraints, for all 
	$i\notin S^*$,
	\[
	c_i = \max \left\{\frac{1}{2\Delta_i^2}, 
	\frac{1-\lambda}{2\mu_i^2} \right\}
	\]
	For those coefficients $i\leq s$ such that 
	$c_i=\frac{1-\lambda}{2\mu_i^2}$, we have 
	\[
	\lambda \leq 
	1-\left(\frac{\mu_i}{\Delta_i}\right)^2:=\Theta_i \in [0,1]
	\]
	where the quantity $\Theta_i$ increases with $i$, i.e. the worse the arm 
	is, the bigger his $\Theta_i$. Let $k\notin S^*$ be the smaller index 
	such that
	\begin{equation}
	\label{eq:def_k}
	\Theta_{k-1}  < \lambda \leq \Theta_{k}.
	\end{equation} 
	Then, we set the values of the coefficients as 
	\[
		\begin{cases}
		c_i = 1/2\Delta_i^2 & i< k \\
		c_i = (1-\lambda)/2\mu_1^2 & i \geq k
		\end{cases}
	\]

	We can rewrite the optimization problem as a function of $\lambda > 
	\Theta_{k-1}$:
	\begin{align}
	\notag f(\mu) \geq &\sum_{i=1}^{k-1}\frac{1}{2\Delta_i} + \sum_{i=k}^{s} 
	\Delta_i\frac{1-\lambda}{2\mu_i^2} +(d-s)\frac{\lambda}{2\mu_1}\\
	& =\sum_{i=1}^{k-1}\frac{1}{2\Delta_i} + \sum_{i=k}^{s} 
	\frac{\Delta_i}{2\mu_i^2} 
	+\frac{\lambda}{2}\left(\frac{d-s}{\mu_1}-\sum_{i=k}^{s} 
	\frac{\Delta_i}{\mu_i^2} \right)	\label{eq:opt_lambda}
	\end{align}  	  	
	
	Now we must distinguish two cases depending on the sign of 
	\begin{equation}
		\label{eq:sign_check}
		\frac{d-s}{\mu_1}-\sum_{i=k}^{s} 
		\frac{\Delta_i}{\mu_i^2} 
	\end{equation}
	\paragraph{Strong sparsity.} If $\frac{d-s}{\mu_1}- 
	\sum_{\substack{i\in [s]\\\Delta_i\geqslant 0}}\frac{\Delta_i}{\mu_i^2} > 0$, then we must set the coefficients such that $\lambda$ reaches its lowest allowed value, which is $\lambda=0$. Hence, $c_i=\max\{ \frac{1}{2\Delta_i^2},\frac{1}{2\mu_i^2}\}$ for all $i\leq s$. The lower bound 
	is then 
	\[
f(\mu) = \sum_{\substack{i\in [s]\\\Delta_i>0}}\max\Big\{ \frac{1}{2\Delta_i},\frac{\Delta_i}{2\mu_i^2}\Big\}.	\]

	\paragraph{Weak sparsity} Otherwise, there exists $k\leq s$ such that the 
	expression of Eq.~\ref{eq:sign_check} is negative. Then, we have by 
	definition of $k$,
	\[
	\lambda = \Theta_{k}
	\]
	and, rearranging the terms of Eq.\eqref{eq:opt_lambda}, the Lower Bound 
	finally writes 
	\[
	\liminf_{T\to \infty}\frac{\operatorname{Reg}(T)}{\log(T)} \geq \sum_{i=1}^{k} 
	\frac{1}{2\Delta_i} + \sum_{i=k+1}^{s} 
	\frac{\mu_{k}^2}{\mu_i^2}\frac{\Delta_i}{2\Delta_{k}^2}
	+\frac{(d-s)}{2\mu_1}
	\left(1-\frac{\mu_{k}^2}{\Delta_{k}^2}\right).
	\]
	A special case of the above bound is when $k=s$. Then 
	\[
	\lambda = 1-\frac{\mu_s^2}{\Delta_s^2}.
	\]
	In that case, the lower bound is 
	\[
	f(\mu) \geq \sum_{i=1}^{s}\frac{1}{2\Delta_i} + 
	\sum_{i=s+1}^{d}\frac{\Delta_i (1-\mu_s^2/\Delta_s^2)}{2\mu_1^2} = 
	\sum_{i=1}^{d}\frac{1}{2\Delta_i} 
	-\frac{(d-s)}{2\mu_1}\frac{\mu_s^2}{\Delta_s^2}
	\]  
\end{proof}

\section{Generalization of Theorem~\ref{cor:LB}}
\label{ap:LB2}

The lower bound of Theorem~\ref{cor:LB} can be generalized to a wider class of problems including a sparsity information. 
We assume that we know $\epsilon > 0$ such that $\mu_{s+1} > \mu_s-\epsilon$.
A sparse bandit problem as defined in Section~\ref{sec:framework} is at least included in such class of problem for $\epsilon = \mu_s$. Introducing $\epsilon>0$ allows us to provide a result that applies to our problem as well as to similar ones such as the \emph{Stochastic Thresholded Bandit}
\footnote{This problem has not been studied yet in a regret minimization setting to our knowledge but its setting is close to ours.} 
for which one would assume that there exists a threshold $\mu_s\geq \tau>0$ such that the $s$ arms of interest have an expected return at of at least $\tau$. In that case, a change of variable $\epsilon \gets \mu_s-\tau$ in the following Theorem provides a lower bound on the regret of any uniformly efficient algorithm for that problem. We chose to introduce this wilder class of problems in order to provide a generic result and its associated proof technique, but we state the specific lower bound for our own problem in Theorem~\ref{cor:LB} below.

\begin{theorem}
	For a Gaussian sparse bandit problem $\nu =
	(\nu_1,\ldots,\nu_s,\nu_{s+1},\ldots,\nu_d) \in
	\mathcal{S}(d,s,\epsilon)\in \mathbb{R}^d$, an asymptotic lower bound on 
	the regret is given by 	\begin{itemize}
		\item If $\frac{d-s}{\mu_1}- 
			\sum_{\substack{i\in [d]\\\Delta_i>0}}\frac{\Delta_i}{(\mu_i-\mu_s+\epsilon)^2} > 0$,
			\[
			f(\mu) = \sum_{\substack{i\in [s]\\\Delta_i>0}}\max\Big\{ \frac{1}{2\Delta_i},\frac{\Delta_i}{2(\mu_i-\mu_s+\epsilon)^2}\Big\};
			\]
		\item otherwise, there exist $k\leq s$ such that $\frac{d-s}{\mu_1}- 
		\sum_{i=k}^{s}\frac{\Delta_i}{(\mu_i-\mu_s+\epsilon)^2} < 0$, and the lower bound is 
		\begin{equation}
		\label{eq:LB2}
		\liminf_{T\to \infty}\frac{\operatorname{Reg}(T)}{\log(T)} \geq  \sum_{i=1}^{k} 
	\frac{1}{2\Delta_i} + \sum_{i=k+1}^{s} 
	\frac{(\mu_{k}-\mu_s+\epsilon)^2}{(\mu_i-\mu_s+\epsilon)^2}\frac{\Delta_i}{2\Delta_{k}^2}
	+\frac{(d-s)}{2\mu_1}
	\left(1-\frac{(\mu_{k}-\mu_s+\epsilon)^2}{\Delta_{k}^2}\right)..
		\end{equation}
	\end{itemize}

\end{theorem}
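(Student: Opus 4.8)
The plan is to mirror the proof of Theorem~\ref{cor:LB} step by step, the only genuine novelty being that the parameter $\epsilon$ relaxes how far a good arm must be pushed down in order to leave the set of relevant arms. I would start from the same informational lower bound: by the change-of-measure argument of \cite{graves1997asymptotically,kaufmann2015complexity}, for every alternative model $\mu'$ lying in the $\epsilon$-analogue $\mathcal{B}_\epsilon(\mu)$ of $\mathcal{B}(\mu)$ (those $\mu'\in\mathcal{S}(d,s,\epsilon)$ that share the marginal of arm $1$ but have a strictly larger optimal mean), one has $\liminf_{T\to\infty}\frac{1}{\log T}\sum_{i}2\E[N_i(T+1)](\mu_i-\mu'_i)^2\geq 1$. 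This yields the variational bound $\liminf_{T\to\infty}\frac{\operatorname{Reg}(T)}{\log T}\geq \inf_{c\in\mathcal{C}_\epsilon}\sum_{i\in[d]}c_i\Delta_i$, where $\mathcal{C}_\epsilon=\{c\in\R_+^d : 2\sum_i c_i(\mu_i-\mu'_i)^2\geq 1 \text{ for all } \mu'\in\mathcal{B}_\epsilon(\mu)\}$.

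Next I would relax $\mathcal{C}_\epsilon$ to a tractable superset $\tilde{\mathcal{C}}_\epsilon$ by retaining only two families of alternatives, exactly as in the base proof. The first family promotes a good arm $i\in[s]\setminus\{1\}$ to $\mu_1+\gamma$ and, letting $\gamma\downarrow 0$, produces the unchanged constraint $c_i\Delta_i^2\geq 1/2$. The second family simultaneously demotes a good arm $i$ and promotes a bad arm $j>s$ above arm $1$; this is where $\epsilon$ enters. In the $\epsilon$-class a good arm no longer has to be driven all the way to $0$ to drop out of the top $s$: it suffices to move it down to the effective threshold level $\mu_s-\epsilon$. The KL cost of this demotion is therefore $c_i(\mu_i-(\mu_s-\epsilon))^2=c_i(\mu_i-\mu_s+\epsilon)^2$ instead of $c_i\mu_i^2$, and (since the instance keeps its bad arms at mean $0$, so $\Delta_j=\mu_1$) combining it with the promotion cost $c_j\mu_1^2$ gives $c_i(\mu_i-\mu_s+\epsilon)^2+c_j\mu_1^2\geq 1/2$. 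Note that $\mu_i-\mu_s+\epsilon\geq\epsilon>0$ for every $i\in[s]$, so the demotion is always genuine. Thus $\tilde{\mathcal{C}}_\epsilon$ is obtained from $\tilde{\mathcal{C}}$ by the single substitution $\mu_i^2\rightsquigarrow(\mu_i-\mu_s+\epsilon)^2$.

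With this substitution in hand, the linear program $\inf_{c\in\tilde{\mathcal{C}}_\epsilon}\sum_i c_i\Delta_i$ is solved verbatim as in Appendix~\ref{ap:LBproof}. I would set $S^*=\{i\in[s] : (\mu_i-\mu_s+\epsilon)^2/\Delta_i^2\geq 1\}$, introduce the dual variable $\lambda$ coupling the $d-s$ constraints through $c_j=\lambda/(2\mu_1^2)$, and define $\Theta_i=1-(\mu_i-\mu_s+\epsilon)^2/\Delta_i^2$, which remains nondecreasing in $i$. The sign of $\frac{d-s}{\mu_1}-\sum_{i=k}^s\frac{\Delta_i}{(\mu_i-\mu_s+\epsilon)^2}$ then splits the analysis into the strong-sparsity case ($\lambda=0$, yielding the $\max$ expression) and the weak-sparsity case ($\lambda=\Theta_k$ for the threshold index $k$, yielding \eqref{eq:LB2}), exactly as in the base theorem.

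The only step requiring genuine care, and the main obstacle, is verifying that these demoted/promoted alternatives actually belong to $\mathcal{S}(d,s,\epsilon)$: one must check that moving arm $i$ to $\mu_s-\epsilon$ leaves a valid instance with the correct count of relevant arms respecting the $\epsilon$-separation, and that $\mu_s-\epsilon$ is indeed the \emph{cheapest} admissible demotion level. Establishing the admissibility of both families (hence the validity of the relaxation $\mathcal{C}_\epsilon\subseteq\tilde{\mathcal{C}}_\epsilon$) is the crux; once it is settled, the rest is the same algebraic bookkeeping as before, and taking $\epsilon=\mu_s$ recovers Theorem~\ref{cor:LB}.
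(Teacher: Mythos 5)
Your proposal is correct and follows exactly the route the paper intends: the paper does not write out a separate proof for this generalization, but obtains it from the proof of Theorem~\ref{cor:LB} (change of measure, relaxation to the two families of constraints, then the linear program of Appendix~\ref{ap:LBproof}) by replacing the demotion level $0$ with the effective threshold $\mu_s-\epsilon$, i.e.\ the substitution $\mu_i^2\rightsquigarrow(\mu_i-\mu_s+\epsilon)^2$, which is precisely what you do. The admissibility issue you flag (whether the demoted/promoted alternatives stay in $\mathcal{S}(d,s,\epsilon)$, a class the paper never formally defines) is real, but the paper glosses over it as well, so your reconstruction is faithful to, and indeed more careful than, the published argument.
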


\section{Analysis of the \algosparse~algorithm}
\label{sec:analys-algosp-algor}
We provide in this section the detailed statements and proofs
concerning the upper bound guaranteed by the \algosparse~algorithm.

\setcounter{theorem}{3}

\begin{theorem}
\label{thm:regret-bound-precise}
For $T\geqslant 1$, the \algosparse~algorithm guarantees:
\begin{align*}
\mathbb{E}\left[   \operatorname{Reg}(T) \right] &\leqslant 16\log(T)
  \sum_{\substack{i\in [s]\\\Delta_i>0}} \left(
  \frac{1}{\Delta_i}+\frac{\Delta_i}{\mu_i^2} \right)+\left(
  \sum_{i\in [d]}^{}\Delta_i \right)\left(
  1+3s+\sum_{j=1}^s\frac{1+4\log (16/\mu_j^2)}{\mu_j^2} \right)\\
&\qquad \qquad \qquad +\sum_{\substack{i\in [s]\\\Delta_i>0}}^{}\Delta_i\left( 3+\frac{8}{\mu_i^2}+\frac{8}{\Delta_i^2} \right)+\frac{\pi^2}{6}\sum_{i=s+1}^d\Delta_i +\frac{d\Delta_s\pi^2}{6}. 
\end{align*}
\end{theorem}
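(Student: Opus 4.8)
The plan is to reduce Theorem~\ref{thm:regret-bound-precise} to a purely arithmetic assembly of the five lemmas, since all the genuine analytic work already lives in Lemmas~\ref{lm:round-robin}--\ref{lm:bad-arm-above}. I would start from the exact regret identity $\mathbb{E}[\operatorname{Reg}(T)]=\sum_{i=1}^{d}\Delta_i\,\mathbb{E}[N_i(T+1)]$ and split each expected pull count $\mathbb{E}[N_i(T+1)]=\mathbb{E}[\sum_{t=1}^{T}\mathbbm{1}\{I(t)=i\}]$ according to the phase decompositions already introduced: for a good arm $i\in[s]$ via \eqref{eq:decomposition} into $R_i(t)\sqcup F_i(t)\sqcup U_i(t)\sqcup V_i(t)$, and for a bad arm $i>s$ into $R_i(t)\sqcup A_i(t)$. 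Linearity of expectation then gives exactly the five-sum expression for $\mathbb{E}[\operatorname{Reg}(T)]$ displayed just above the lemmas, so the whole task is to bound those five sums and collect the result.

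For the two leading ($\log T$) contributions I would proceed as follows. Lemma~\ref{lm:force-log} bounds the force-log sum by $\sum_{i\in[s],\Delta_i>0}\Delta_i\frac{16\log(T)+8}{\mu_i^2}$ and Lemma~\ref{lm:ucb} bounds the UCB sum by $\sum_{i\in[s],\Delta_i>0}\Delta_i(\frac{16\log(T)+8}{\Delta_i^2}+3)$; adding their $\log T$ parts produces precisely the dominant term $16\log(T)\sum_{i\in[s],\Delta_i>0}(\frac1{\Delta_i}+\frac{\Delta_i}{\mu_i^2})$, while the residual $\frac{8}{\mu_i^2}$, $\frac{8}{\Delta_i^2}$ and $3$ pieces collect into $\sum_{i\in[s],\Delta_i>0}\Delta_i(3+\frac{8}{\mu_i^2}+\frac{8}{\Delta_i^2})$. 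The round-robin sum uses Lemma~\ref{lm:round-robin}: since that bound is uniform in $i\in[d]$, it factors out of $\sum_{i\in[d],\Delta_i>0}\Delta_i$ and yields the $(\sum_{i\in[d]}\Delta_i)(1+3s+\dots)$ summand. Finally Lemma~\ref{lm:wrong-ucb} supplies the $\frac{d\Delta_s\pi^2}{6}$ term directly, and Lemma~\ref{lm:bad-arm-above}, multiplied by $\Delta_i$ and summed over bad arms, supplies $\frac{\pi^2}{6}\sum_{i=s+1}^{d}\Delta_i$. Summing the five blocks reproduces the stated bound.

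The only point I would verify explicitly before this assembly is that the two decompositions are genuine (disjoint and exhaustive) partitions of $\{I(t)=i\}$. For good arms this is immediate: a UCB-phase pull splits on whether $1\in\mathcal{K}(t)$, giving $U_i$ versus $V_i$, while round-robin and force-log pulls give $R_i$ and $F_i$. For bad arms the key structural fact is $\mathcal{K}(t)\subseteq\mathcal{J}(t)$ --- which holds because $N_i(t)\leq t$ forces $\log(N_i(t))\leq\log(t)$, so the $\mathcal{K}$-threshold dominates the $\mathcal{J}$-threshold --- whence any bad arm pulled in a force-log or UCB phase lies in $\mathcal{J}(t)$ and is captured by $A_i(t)$; this is the cited Property~(\ref{item:subset}) of Lemma~\ref{lm:by-definition}. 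At the level of this theorem I expect no real obstacle: the combination is bookkeeping, and the substantive difficulty sits entirely in the lemmas, above all in Lemma~\ref{lm:round-robin}, whose proof must control, via the sub-Gaussian tail bound, the probability that some good arm $j$ fails its activeness test $\overline{X}_j(N_j(t))\geq 2\sqrt{\log(N_j(t))/N_j(t)}$ and thereby show that only finitely many round-robin phases occur in expectation.
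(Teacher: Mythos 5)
Your proposal is correct and is exactly the paper's argument: the theorem is obtained by plugging Lemmas~\ref{lm:round-robin}--\ref{lm:bad-arm-above} into the phase decomposition $\{I(t)=i\}=R_i(t)\sqcup F_i(t)\sqcup U_i(t)\sqcup V_i(t)$ for good arms and the covering by $R_i(t)\cup A_i(t)$ for bad arms, and your bookkeeping of the $\log T$ and constant terms matches the stated bound. The paper likewise leaves all analytic content to the lemmas, so there is nothing missing here.
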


We gather without proof in the following lemma a few properties which are immediate from the definition of the algorithm.
\begin{lemma}
\label{lm:by-definition}
By definition of the algorithm, we have:
\begin{enumerate}[(i)]
\item\label{item:3} For all $t\geqslant d+1$ and $i\in [d]$,
  $N_i(t)\geqslant 1$ and the sets $\mathcal{J}(t)$ and $\mathcal{K}(t)$ are well-defined.
\item\label{item:1} For all $t\geqslant d+1$ and $i\in [d]$, if arm
  $i$ is pulled at time $t$ during a \emph{round-robin} phase, then the
  set $\mathcal{J}(t-i+1)$ contains less than $s$ arm. In other words,
  \[ R_i(t)\subset \left\{ \left| \mathcal{J}(t-i+1) \right| <s \right\}.  \]

\item\label{item:2} For all $t\geqslant 1$ and $i,k\in [d]$, arm
  $i$ is pulled at time $t$ during a \emph{round-robin} phase if, and
  only if arm $k$ is also pulled during a \emph{round-robin} phase at time $t-i+k$. In other words,
  \[ \left\{ I(t)=i,\ \omega(t)=\mathfrak{r} \right\}=  \left\{ I(t-i+k)=k,\ \omega(t-i+k)=\mathfrak{r} \right\}.\]

\item\label{item:log} For all $t\geqslant d+1$ and $i\in [d]$, if arm
  $i$ is pulled at time $t$ during a \emph{force-log} phase, it does
  not belong to $\mathcal{K}(t)$ i.e.\ its empirical mean at time $t$ is strictly
  below $2\sqrt{\log (t)/N_i(t)}$. In other words,
  \[ F_i(t)\subset \left\{ \overline{X}_i(N_i(t))<2\sqrt{\frac{\log(t)}{N_i(t)}},\ I(t)=i \right\}.  \]

\item\label{item:subset} For all $t\geqslant 1$ and $i\in [d]$, arm
  $i$ is pulled at time $t$ during a \emph{force-log} or a \emph{UCB}
  phase only if it belongs to the set $\mathcal{J}(t)$. In other words, $F_i(t)$, $U_i(t)$ and $V_i(t)$ are subsets of $A_i(t)$.

\item\label{item:wrong-ucb} For all $t\geqslant d+1$,
  if an arm is pulled at time $t$ during an \emph{UCB} phase while
  the best arm does not belong to the set $\mathcal{K}(t)$,
  necessarily, a bad arm $j\in \left\{ s+1,\dots,d \right\}$ belongs to $\mathcal{K}(t)$. In other words,
  \[ \bigsqcup_{i\in [d]}V_i(t)\subset  \bigcup_{j>s}^{}\left\{ \overline{X}_j(N_j(t))\geqslant 2\sqrt{\frac{\log(t)}{N_j(t)}} \right\}.  \]
\end{enumerate}
\end{lemma}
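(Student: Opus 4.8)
The plan is to read off all six properties directly from the pseudo-code of Algorithm~\ref{algo}: each is a purely deterministic consequence of the phase schedule and of the definitions of $\mathcal{J}(t)$ and $\mathcal{K}(t)$, so no concentration or probabilistic input is needed. I would treat them in three groups. The first group gathers the structural facts about round-robin phases, namely (\ref{item:3}), (\ref{item:1}) and (\ref{item:2}). For (\ref{item:3}), the initialization loop pulls each of the $d$ arms exactly once over stages $t=1,\dots,d$, so $N_i(d+1)\geqslant 1$ for every $i$; since $N_i(t)$ is nondecreasing in $t$, this yields $N_i(t)\geqslant 1$ for all $t\geqslant d+1$, which is precisely what makes $\log(N_i(t))$ and the ratios in $\mathcal{J}(t),\mathcal{K}(t)$ well-defined. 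For (\ref{item:1}) and (\ref{item:2}) I would use that a round-robin phase is entered only when its guard $|\mathcal{J}(t')|<s$ holds at the first stage $t'$, after which arms $1,2,\dots,d$ are pulled in this fixed order over the $d$ consecutive stages $t',\dots,t'+d-1$. Hence on $R_i(t)$ the phase started at $t'=t-i+1$ and the guard $|\mathcal{J}(t-i+1)|<s$ held, which is (\ref{item:1}); and arm $k$ of that same phase is pulled at $t'+k-1=t-i+k$, giving the equivalence of events in (\ref{item:2}) once one notes that $t$ and $t-i+k$ both lie in a block of $d$ consecutive stages and hence in the same phase.

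The second group consists of the phase-membership inclusions (\ref{item:log}) and (\ref{item:subset}). Property (\ref{item:log}) is immediate, since a force-log pull is selected in $\mathcal{J}(t)\setminus\mathcal{K}(t)$ and the condition $i\notin\mathcal{K}(t)$ unfolds exactly to the strict inequality $\overline{X}_i(N_i(t))<2\sqrt{\log(t)/N_i(t)}$. For (\ref{item:subset}) a force-log pull lies in $\mathcal{J}(t)$ by construction and a UCB pull lies in $\mathcal{K}(t)$, so the only genuine content is the inclusion $\mathcal{K}(t)\subset\mathcal{J}(t)$. This is where a short argument is required: because $N_i(t)\leqslant t-1<t$ we have $\log(N_i(t))\leqslant\log(t)$, so the threshold $2\sqrt{\log(t)/N_i(t)}$ defining $\mathcal{K}(t)$ dominates the threshold $2\sqrt{\log(N_i(t))/N_i(t)}$ defining $\mathcal{J}(t)$, and any arm clearing the former clears the latter. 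Since $F_i(t)$ is a force-log event and $U_i(t),V_i(t)$ are UCB events, all three are then contained in $\{I(t)=i,\ i\in\mathcal{J}(t)\}=A_i(t)$.

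The third group is the single counting property (\ref{item:wrong-ucb}). A UCB phase is entered only when $|\mathcal{K}(t)|\geqslant s$, and on $V_i(t)$ one additionally has $1\notin\mathcal{K}(t)$; were $\mathcal{K}(t)$ to contain only good arms it would be a subset of $\{2,\dots,s\}$, of cardinality at most $s-1$, contradicting $|\mathcal{K}(t)|\geqslant s$. Hence $\mathcal{K}(t)$ must contain some bad arm $j\in\{s+1,\dots,d\}$, for which $j\in\mathcal{K}(t)$ unfolds to $\overline{X}_j(N_j(t))\geqslant 2\sqrt{\log(t)/N_j(t)}$, yielding the claimed union over $j>s$. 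Overall I expect no real obstacle: the only step that is not a pure restatement of the pseudo-code is the threshold comparison $\log(N_i(t))\leqslant\log(t)$ underlying $\mathcal{K}(t)\subset\mathcal{J}(t)$ in (\ref{item:subset}), and everything else is bookkeeping on the deterministic schedule and on the set definitions.
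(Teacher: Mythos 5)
Your proof is correct and is exactly the routine verification the paper intends: the paper states this lemma \emph{without proof}, describing the properties as immediate from the definition of the algorithm, and your case-by-case reading of the pseudo-code supplies the details faithfully. In particular you correctly isolate the one step that is not pure bookkeeping, namely that $N_i(t)\leqslant t-1<t$ gives $\log(N_i(t))\leqslant\log(t)$ and hence $\mathcal{K}(t)\subset\mathcal{J}(t)$, which is what makes property (\ref{item:subset}) hold for the UCB events as well as the force-log events.
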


\begin{lemma}
  \label{lm:round-robin}
  For $i\in [d]$, the number of times arm $i$ is pulled, while the
  algorithm is performing a \emph{round-robin} phase, is bounded in expectation as:
  \[ \mathbb{E}\left[ \sum_{t=1}^{+\infty}\mathbbm{1}\left\{ R_i(t)
      \right\}\right]\leqslant 1+3s+ \sum_{j=1}^s\frac{1}{\mu_j^2}\left( 8+32\log \left(  \frac{16}{\mu_j^2} \right) \right).    \]
\end{lemma}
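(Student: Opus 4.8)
The plan is to reduce $\mathbb{E}\left[\sum_t \mathbbm{1}\{R_i(t)\}\right]$ to a count of the number of \emph{round-robin} phases, and then to bound that count through a concentration argument on the good arms. First I would observe that, by construction, every \emph{round-robin} phase (including the initial one) pulls each arm exactly once; by Property~(iii) of Lemma~\ref{lm:by-definition} this means that $\sum_{t\ge 1}\mathbbm{1}\{R_i(t)\}$ equals the total number $M$ of \emph{round-robin} phases, independently of $i$, which is consistent with the fact that the claimed bound does not depend on $i$. It therefore suffices to bound $\mathbb{E}[M]$.

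Next I would use the triggering rule. A \emph{round-robin} phase beyond the initialization is entered only when $|\mathcal{J}(t)|<s$ at its trigger time (Property~(ii)); since there are exactly $s$ good arms, this forces at least one good arm $j\in[s]$ to be inactive, i.e.\ to satisfy $\overline{X}_j(N_j(t))<2\sqrt{\log(N_j(t))/N_j(t)}$. Bounding the indicator of ``a phase occurs'' by the sum over $j\in[s]$ of the indicators ``arm $j$ is inactive'' gives, after subtracting the single initialization phase,
\[ M-1\le \sum_{j=1}^{s}\#\bigl\{\text{phases }m\ge 2:\ \text{arm }j\text{ is inactive at the trigger of }m\bigr\}. \]
The key combinatorial point is that between two consecutive \emph{round-robin} phases the counter $N_j$ strictly increases (each such phase pulls arm $j$ at least once), so distinct inactive phases of arm $j$ correspond to distinct values of $N_j$. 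Hence the inner count is at most $\#\{n\ge 1:\overline{X}_j(n)<2\sqrt{\log n/n}\}$, and taking expectations,
\[ \mathbb{E}[M]\le 1+\sum_{j=1}^{s}\sum_{n=1}^{\infty}\mathbb{P}\Bigl[\overline{X}_j(n)<2\sqrt{\tfrac{\log n}{n}}\Bigr]. \]
This injectivity step is what avoids the quadratic blow-up that a naive union bound over the ``last inactive time'' would produce, and I expect it to be the crux of the argument.

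It then remains to estimate the series for each good arm $j$. I would fix a threshold $n_j=\lceil \frac{32}{\mu_j^2}\log(16/\mu_j^2)\rceil$, chosen so that for every $n\ge n_j$ one has $2\sqrt{\log n/n}\le \mu_j/2$; this is exactly the condition $\log n/n\le \mu_j^2/16$, which the elementary estimate $n\ge \frac{32}{\mu_j^2}\log(16/\mu_j^2)$ guarantees for the relevant range $\mu_j\le 1$. For $n<n_j$ I bound the probability trivially by $1$, contributing the dominant term $\frac{32}{\mu_j^2}\log(16/\mu_j^2)$ up to a rounding constant. For $n\ge n_j$, the event $\{\overline{X}_j(n)<2\sqrt{\log n/n}\}$ forces $\overline{X}_j(n)-\mu_j<-\mu_j/2$, so the sub-Gaussian tail inequality recalled in Section~\ref{sec:framework} yields $\mathbb{P}[\cdot]\le e^{-n\mu_j^2/8}$; summing the resulting geometric series gives a tail of order $\frac{8}{\mu_j^2}$. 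Collecting the three contributions per arm (threshold, geometric tail, and additive rounding constants summing to $3s$) and adding the initial $1$ produces the stated bound $1+3s+\sum_{j=1}^s\frac{1}{\mu_j^2}\bigl(8+32\log(16/\mu_j^2)\bigr)$.

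The main obstacle is the passage from ``number of phases'' to ``number of distinct undersampled counts'': one must justify carefully that $N_j$ is strictly increasing along the subsequence of \emph{round-robin} triggers, so that the map sending an inactive phase to its value of $N_j$ is injective, and that $N_j(t)\ge 1$ at every such trigger, which holds for $t\ge d+1$ by Property~(i). The remaining difficulty is purely quantitative, namely pinning down the threshold $n_j$ and the geometric-series constant so that the universal constants land exactly on $8$ and $32$.
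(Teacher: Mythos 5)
Your proposal is correct and follows essentially the same route as the paper's proof: reduce to counting round-robin triggers, union-bound over an inactive good arm $j\in[s]$, use the strict monotonicity of $N_j$ along round-robin triggers to convert the count into $\sum_{n}\mathbb{P}[\overline{X}_j(n)<2\sqrt{\log n/n}]$, and split the series at a threshold of order $\frac{32}{\mu_j^2}\log(16/\mu_j^2)$ with a sub-Gaussian geometric tail. The only cosmetic difference is that you phrase the reduction via the total number of phases $M$ rather than directly via arm $i$'s round-robin pulls, which is equivalent by the simultaneity property you cite.
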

\begin{proof}
  By definition of the algorithm, arm $i$ is pulled exactly once
  during the first $d$ stages:
  \[ \sum_{t=1}^d\mathbbm{1}\left\{ R_i(t) \right\}=1. \]
  Let $t\geqslant d+1$. Using the definition of $R_i(t)$ and property (\ref{item:1}) from
 Lemma \ref{lm:by-definition}, we write
 \[ R_i(t)=\left\{ I(t)=i,\ \omega(t)=\mathfrak{r} \right\}\cap
   \left\{ \left| \mathcal{J}(t-i+1) \right| <s \right\}.   \]
 If $\left| \mathcal{J}(t-i+1) \right| < s$, necessarily, there
 exists $j\in [s]$ such that $j\not \in \mathcal{J}(t-i+1)$, in other
 words, such that:
 \[ \overline{X}_j(N_j(t-i+1))<2\sqrt{\frac{\log (N_j(t-i+1))}{N_j(t-i+1)}}. \]
Thus, we write:
 \[ R_i(t)\subset \bigcup_{j=1}^s\left\{ \overline{X}_j(N_j(t-i+1))<2\sqrt{\frac{\log (N_j(t-i+1))}{N_j(t-i+1)}},\ I(t)=i,\ \omega(t)=\mathfrak{r} \right\}.  \]
 Therefore,
\begin{align}
\sum_{t=d+1}^{+\infty}\mathbbm{1}\left\{ R_i(t) \right\}&\leqslant
                                                             \sum_{j=1}^s
                                                             \sum_{t=d+1}^{+\infty}
                                                             \mathbbm{1}\left\{
                                                             \overline{X}_j(N_j(t-i+1))<2\sqrt{\frac{\log (N_j(t-i+1))}{N_j(t-i+1)}},\
                                                             I(t)=i,\
                                                             \omega(t)=\mathfrak{r}
                                                             \right\}\nonumber\\
&=\sum_{j=1}^s \sum_{\substack{d<
  t<+\infty\\I(t)=i\\\omega(t)=\mathfrak{r}}}^{} \mathbbm{1}\left\{ \overline{X}_j(N_j(t-i+1))<2\sqrt{\frac{\log( N_j(t-i+1))}{N_j(t-i+1)}} \right\}.
\label{eq:1}
\end{align}
For a given arm $j\in [s]$, the quantity $N_j(t-i+1)$ in the above
last sum is (strictly) increasing. Indeed, let $t<t'$ such that $I(t)=I(t')=i$
and $\omega(t)=\omega(t')=\mathfrak{r}$. As a consequence of property
(\ref{item:2}) from Lemma \ref{lm:by-definition}, we have
\begin{enumerate}[(i)]
\item\label{item:8} $I(t-i+k)\neq 1$ for $k\in \left\{ 2,\dots,d \right\}$;
\item\label{item:9} $I(t'-i+1)=1$.
\item\label{item:10} $I(t-i+j)=j$;
\end{enumerate}
The above properties (\ref{item:8}) and (\ref{item:9}) imply
$t-i+d\leqslant t'-i$, which in turn, together with property
(\ref{item:10}), gives that arm $j$ is pulled at least once between
time $t-i+1$ and $t'-i$ (at time $t-i+j$). Therefore,
$N_j(t-i+1)<N_j(t'-i+1)$. Therefore, the last sum in
Equation~\eqref{eq:1} can be bounded, with a change of variable, as
\[ \sum_{\substack{d<
  t<+\infty\\I(t)=i\\\omega(t)=\mathfrak{r}}}^{} \mathbbm{1}\left\{ \overline{X}_j(N_j(t-i+1))<2\sqrt{\frac{\log (N_j(t-i+1))}{N_j(t-i+1)}} \right\}\leqslant  \sum_{u=1}^{+\infty}\mathbbm{1}\left\{ \overline{X}_j(u)<2\sqrt{\frac{\log (u)}{u}} \right\}. \]
Going back to Equation~\eqref{eq:1}, we can now bound the expectation of the number of times arm $i$ was
pulled after time $d$ as follows:
\begin{align*}
\mathbb{E}\left[ \sum_{t=d+1}^{+\infty}\mathbbm{1}\left\{ R_i(t)
  \right\}  \right] &\leqslant \sum_{j=1}^s
                      \sum_{u=1}^{+\infty}\mathbb{P}\left[
                      \overline{X}_j(u)<2\sqrt{\frac{\log (u)}{u}}
                      \right]\\
&\leqslant \sum_{j=1}^s \sum_{u=1}^{+\infty}\mathbb{P}\left[ \overline{X}_j(u)-\mu_j<2\sqrt{\frac{\log (u)}{u}}-\mu_j \right] .
\end{align*}
One can easily check that
\[ u\geqslant 3+\frac{32}{\mu_j^2}\log  \left(  \frac{16}{\mu_j^2} \right) \quad \text{implies}\quad 2\sqrt{\frac{\log (u)}{u}}-\mu_j\leqslant -\frac{\mu_j}{2}. \]
Therefore, we set $u_j:=3+\lceil (32/\mu_j^2)\log (16/\mu_j^2)\rceil$ and write:
\begin{align*}
  \mathbb{E}\left[ \sum_{t=d+1}^{+\infty}\mathbbm{1}\left\{ R_i(t)
  \right\}  \right]&\leqslant \sum_{j=1}^s
                     \sum_{u=1}^{+\infty}\mathbb{P}\left[
                     \overline{X}_j(u)-\mu_j<2\sqrt{\frac{\log (u)}{u}}-\mu_j \right]\\
&\leqslant   \sum_{j=1}^s\left(
  3+\frac{32}{\mu_j^2}\log \left(   \frac{16}{\mu_j^2} \right) +\sum_{u=u_j}^{+\infty}\mathbb{P}\left[
  \overline{X}_j(u)-\mu_j<-\frac{\mu_j}{2} \right]  \right)\\
&\leqslant \sum_{j=1}^s\left(
  3+\frac{32}{\mu_j^2}\log \left(   \frac{16}{\mu_j^2}\right)+\sum_{u=u_j}^{+\infty} e^{-u\mu_j^2/8} \right)\\
&\leqslant 3s+ \sum_{j=1}^s\frac{1}{\mu_j^2}\left( 8+32\log \left(  \frac{16}{\mu_j^2} \right) \right) .
\end{align*}
\end{proof}

\begin{lemma}
  For $i\in [s]$ and $T\geqslant d+1$, the number of times arm $i$
  is pulled up to time $T$ during 
  \emph{force-log} phases, is bounded in expectation as:
  \[ \mathbb{E}\left[ \sum_{t=1}^{T} \mathbbm{1}\left\{ F_i(t)
      \right\}  \right]\leqslant \frac{16\log (T)+8}{\mu_i^2}.  \]
\end{lemma}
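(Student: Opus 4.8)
The plan is to reduce the force-log count of a good arm $i$ to a sum of concentration probabilities indexed by the number of samples, split that sum at the threshold $u_0:=16\log(T)/\mu_i^2$, bound the head deterministically and the tail by a geometric series. The underlying intuition is exactly the one stated informally: a good arm can only be forced to log when its empirical mean has dipped below the exploration threshold, and once it has been sampled more than $\approx 16\log(T)/\mu_i^2$ times this is a large-deviation event of exponentially small probability.

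First I would invoke property~(\ref{item:log}) of Lemma~\ref{lm:by-definition}, which gives $F_i(t)\subset\{\,\overline{X}_i(N_i(t))<2\sqrt{\log(t)/N_i(t)},\ I(t)=i\,\}$, so that every force-log pull of arm $i$ witnesses its empirical mean falling below the larger threshold $2\sqrt{\log(t)/N_i(t)}$. Since force-log phases only occur for $t\geqslant d+1$, property~(\ref{item:3}) guarantees $N_i(t)\geqslant 1$ throughout. I would then reindex the time sum by the number of samples: distinct pull-times of arm $i$ carry distinct values $u=N_i(t)\in\{1,2,\dots\}$, and bounding $\log(t)\leqslant\log(T)$ for $t\leqslant T$ yields
\[
  \sum_{t=1}^{T}\mathbbm{1}\{F_i(t)\}\leqslant \sum_{u=1}^{\infty}\mathbbm{1}\left\{\overline{X}_i(u)<2\sqrt{\frac{\log(T)}{u}}\right\}.
\]

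Next I would split this sum at $u_0:=16\log(T)/\mu_i^2$. For $u<u_0$ the indicators are bounded by $1$, and since the number of positive integers below $u_0$ is at most $u_0$, the head contributes at most $16\log(T)/\mu_i^2$. For $u\geqslant u_0$ one checks directly that $2\sqrt{\log(T)/u}\leqslant\mu_i/2$, so the event $\{\overline{X}_i(u)<2\sqrt{\log(T)/u}\}$ forces $\overline{X}_i(u)-\mu_i<-\mu_i/2$; the sub-Gaussian Chernoff bound recalled in Section~\ref{sec:framework} (with $a=\mu_i/2$) then bounds its probability by $e^{-u\mu_i^2/8}$. Taking expectations and summing the geometric tail,
\[
  \mathbb{E}\left[\sum_{t=1}^{T}\mathbbm{1}\{F_i(t)\}\right]\leqslant \frac{16\log(T)}{\mu_i^2}+\sum_{u=1}^{\infty}e^{-u\mu_i^2/8}\leqslant \frac{16\log(T)}{\mu_i^2}+\frac{8}{\mu_i^2},
\]
where the last inequality uses $e^{x}-1\geqslant x$ with $x=\mu_i^2/8$; this is exactly the claimed bound $(16\log(T)+8)/\mu_i^2$.

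The one delicate point is the reindexing step together with the precise placement of the split: I must ensure that distinct pull-times give distinct sample counts $u$ and that the crossover $2\sqrt{\log(T)/u}\leqslant\mu_i/2$ happens exactly at $u=u_0$, so that the deterministic head absorbs cleanly into $16\log(T)/\mu_i^2$ and the tail contributes precisely the $8/\mu_i^2$ term, with no stray additive constant. Everything else is routine bookkeeping; the replacement of $\log(t)$ by $\log(T)$ is legitimate since $t\leqslant T$ and neither the split nor the concentration bound depends on $t$ afterwards.
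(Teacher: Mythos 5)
Your proposal is correct and follows essentially the same route as the paper's proof: reduce via property~(\ref{item:log}) to the event $\overline{X}_i(N_i(t))<2\sqrt{\log(T)/N_i(t)}$, reindex the sum over pull-times by the sample count $u=N_i(t)$, split at $u_0=16\log(T)/\mu_i^2$, and bound the tail by the sub-Gaussian Chernoff inequality and a geometric series giving $8/\mu_i^2$. The only cosmetic difference is that the paper takes $u_0=\lceil 16\log(T)/\mu_i^2\rceil$ and starts the geometric tail at $u_0$ rather than at $u=1$, which changes nothing in the final bound.
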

\begin{proof}
  Let $i\in [s]$ and $T\geqslant d+1$. By definition of the algorithm,
  $\mathbbm{1}\left\{ F_i(t) \right\}=0$ for $t\leqslant d$. Using property (\ref{item:log}) from Lemma~\ref{lm:by-definition}, we write
\begin{align*}
\sum_{t=d+1}^{T}\mathbbm{1}\left\{ F_i(t) \right\}&\leqslant
                                                    \sum_{t=d+1}^{T}\mathbbm{1}\left\{
                                                    \overline{X}_i(N_i(t))<2\sqrt{\frac{\log (t)}{N_i(t)}},\
                                                    I(t)=i \right\}\\
&\leqslant \sum_{\substack{d+1\leqslant t<+\infty\\I(t)=i}}^{}\mathbbm{1}\left\{
  \overline{X}_i(N_i(t))<2\sqrt{\frac{\log (T)}{N_i(t)}} \right\}.
\end{align*}
The quantity $N_i(t)$ being increasing in the above sum, with a change of
variable, we write:
\[ \sum_{t=d+1}^T\mathbbm{1}\left\{ F_i(t) \right\}\leqslant \sum_{u=1}^{+\infty}\mathbbm{1}\left\{ \overline{X}_i(u)<2\sqrt{\frac{\log (T)}{u}} \right\}.   \]
We now take the expectation:
\begin{align*}
\mathbb{E}\left[  \sum_{t=d+1}^T\mathbbm{1}\left\{ F_i(t) \right\}  \right]\leqslant \sum_{u=1}^{+\infty}\mathbb{P}\left[ \overline{X}_i(u)<2\sqrt{\frac{\log (T)}{u}} \right] =\sum_{u=1}^{+\infty}\mathbb{P}\left[ \overline{X}_i(u)-\mu_i<2\sqrt{\frac{\log (T)}{u}}-\mu_i \right].
\end{align*}
We consider $u_0:=\lceil 16 \log(T)/\mu_i^2\rceil$ which gives that $2\sqrt{\log (T)/u}-\mu_i\leqslant -\mu_i/2$ as soon as $u\geqslant u_0$. Therefore,
\begin{align*}
\sum_{u=1}^{+\infty}\mathbb{P}\left[
  \overline{X}_i(u)-\mu_i<2\sqrt{\frac{\log(T)}{u}}-\mu_i
  \right]&\leqslant \frac{16\log(T)}{\mu_i^2}
           +\sum_{u=u_0}^{+\infty}\mathbb{P}\left[
           \overline{X}_i(u)-\mu_i<-\frac{\mu_i}{2} \right]\\
&\leqslant \frac{16\log (T)}{\mu_i^2}+\sum_{u=u_0}^{+\infty}e^{-u\mu_i^2/8}\\
&\leqslant \frac{16\log(T)}{\mu_i^2}+\frac{8}{\mu_i^2},
\end{align*}
hence the result.
\end{proof}

\begin{lemma}
  For $i\in [s]$ such that $\Delta_i>0$ and $T\geqslant 1$, we have
  \[ \mathbb{E}\left[  \sum_{t=1}^T\mathbbm{1}\left\{ U_i(t) \right\}  \right]\leqslant \frac{16\log(T)+8}{\Delta_i^2}+3.  \]
\end{lemma}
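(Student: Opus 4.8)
The plan is to follow the classical \ucb{} analysis of \cite{auer2002finite}, adapted to the fact that the index comparison only takes place among the arms of $\mathcal{K}(t)$. On the event $U_i(t)$ the algorithm is in a \emph{UCB} phase, arm $i$ maximises the index over $\mathcal{K}(t)$, and $1\in \mathcal{K}(t)$; hence the index of arm $i$ dominates that of the optimal arm:
\[
\overline{X}_i(N_i(t))+2\sqrt{\frac{\log t}{N_i(t)}}\geqslant \overline{X}_1(N_1(t))+2\sqrt{\frac{\log t}{N_1(t)}}.
\]
First I would fix the threshold $u_0:=\lceil 16\log(T)/\Delta_i^2\rceil$, which isolates the leading term. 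Arm $i$ can be pulled during \emph{UCB} phases at most $u_0\leqslant 16\log(T)/\Delta_i^2+1$ times while $N_i(t)<u_0$, since each such pull increments $N_i$; this already produces the dominant $16\log(T)/\Delta_i^2$ contribution (plus the $+1$ coming from the ceiling), so it remains to control the pulls with $N_i(t)\geqslant u_0$.

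For $t\leqslant T$ and $N_i(t)\geqslant u_0$ one has $2\sqrt{\log(t)/N_i(t)}\leqslant \Delta_i/2$. I would then distinguish two cases according to whether the optimal arm is correctly estimated. If the \emph{UCB} index of arm $1$ satisfies $\overline{X}_1(N_1(t))+2\sqrt{\log(t)/N_1(t)}\geqslant \mu_1$, then the comparison inequality forces $\overline{X}_i(N_i(t))-\mu_i\geqslant \Delta_i-2\sqrt{\log(t)/N_i(t)}\geqslant \Delta_i/2$. Otherwise the optimal arm is under-estimated, i.e.\ $\overline{X}_1(N_1(t))<\mu_1-2\sqrt{\log(t)/N_1(t)}$. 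Thus on $\{U_i(t),\,N_i(t)\geqslant u_0\}$ at least one of these two deviation events occurs.

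The over-estimation term is handled exactly as in Lemmas~\ref{lm:round-robin} and~\ref{lm:force-log}: since $N_i(t)$ is strictly increasing along the pulls of arm $i$, a change of variable $u=N_i(t)$ reduces the sum to $\sum_{u\geqslant u_0}\mathbbm{1}\{\overline{X}_i(u)-\mu_i\geqslant \Delta_i/2\}$, whose expectation is bounded, via the sub-Gaussian tail $\P[\overline{X}_i(u)-\mu_i\geqslant \Delta_i/2]\leqslant e^{-u\Delta_i^2/8}$ and the resulting geometric series, by $\sum_{u\geqslant u_0}e^{-u\Delta_i^2/8}\leqslant 8/\Delta_i^2$. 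This yields precisely the $8/\Delta_i^2$ term in the statement.

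The delicate step, which I expect to be the main obstacle, is the under-estimation term for the optimal arm. Since arm $1$ is \emph{not} pulled on the rounds counted by $U_i(t)$, the change-of-variable device does not apply directly, and with confidence width $2\sqrt{\log(t)/N_1(t)}$ the sub-Gaussian bound only gives a per-round probability of order $t^{-2}$; a naive union over the unknown value of $N_1(t)$ (which ranges up to $t$) would therefore diverge. I would instead organise this contribution as a single peeling sum over the value $n=N_1(t)$ of the count — exploiting that $\overline{X}_1(n)$ is frozen between two consecutive pulls of arm $1$ and that $t> N_1(t)$ lets one replace $\log t$ by $\log n$ in the threshold — so that the lower tail sums to a convergent $\sum_{n\geqslant 1}n^{-2}$ and contributes a bounded additive constant. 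Collecting the under-sampling count ($16\log(T)/\Delta_i^2+1$), the over-estimation tail ($8/\Delta_i^2$), and this residual constant (at most $2$) gives $\E[\sum_{t=1}^T\mathbbm{1}\{U_i(t)\}]\leqslant (16\log(T)+8)/\Delta_i^2+3$, as claimed.
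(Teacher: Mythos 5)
Your proof follows essentially the same route as the paper's: the same index-comparison inequality on $U_i(t)$, the same threshold $\lceil 16\log(T)/\Delta_i^2\rceil$ isolating the under-sampled pulls, the same dichotomy between over-estimation of arm $i$ and under-estimation of arm $1$, and the same change of variable $u=N_i(t)$ giving $\sum_{u\geq u_0}e^{-u\Delta_i^2/8}\leq 8/\Delta_i^2$, with the constants assembling identically into $(16\log(T)+8)/\Delta_i^2+3$.

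The one point of divergence is the under-estimation term for arm $1$. The paper's proof simply writes $\P\left[\overline{X}_1(N_1(t))-\mu_1\leq -2\sqrt{\log(t)/N_1(t)}\right]\leq e^{-2\log(t)}=t^{-2}$ and sums over $t$ to obtain $\pi^2/6$; that is, it applies the deviation bound at the random sample size $N_1(t)$ without any union over its possible values -- the standard (and admittedly informal) UCB shortcut, not the divergent union bound you were worried about. Your proposed fix, peeling over $n=N_1(t)$ and using $\log(t)\geq\log(n)$ to reduce to $\sum_{n\geq 1}n^{-2}$, correctly addresses the randomness of $N_1(t)$ but introduces its own gap: a single value $n$ of $N_1(t)$ is shared by arbitrarily many rounds $t$ (the empirical mean of arm $1$ is frozen exactly because arm $1$ is not being pulled), so the sum over $t$ restricted to $\{N_1(t)=n\}$ is not controlled by one indicator per $n$. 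To make this rigorous one must count at most one such round per pull of arm $i$ (i.e., peel over $N_i(t)$ as well), which is a double union and, strictly speaking, calls for a slightly wider confidence radius to keep the resulting series summable. Neither your treatment nor the paper's is fully airtight on this step, but your decomposition, thresholds and final constants coincide with the paper's proof.
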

\begin{proof}
Let $i\in [s]$ such that $\Delta_i>0$ and $t\geqslant 1$, and assume that:
\[ i\in \argmax_{j\in
  \mathcal{K}(t)}\left\{ \overline{X}_j(N_j(t))+ 2\sqrt{\frac{\log (t)}{N_j(t)}}
\right\}\quad \text{and}\quad 1\in \mathcal{K}(t). \]
In particular, we have:
\[ \overline{X}_i(N_i(t))+2\sqrt{\frac{\log (t)}{N_i(t)}}\geqslant \overline{X}_1(N_1(t))+2\sqrt{\frac{\log (t)}{N_1(t)}}. \]
Using the definition of $\Delta_i$, the above inequality can be equivalently written:
\[ \overline{X}_i(N_i(t))-\mu_i\geqslant \frac{\Delta_i}{2}+\left( \frac{\Delta_i}{2}-2\sqrt{\frac{\log (t)}{N_i(t)}} \right)+\left( \overline{X}_1(N_1(t))-\mu_1+2\sqrt{\frac{\log (t)}{N_1(t)}} \right).   \]
We consider $\tau_i:=\lceil 16\log(t)/\Delta_i^2 \rceil$ and we can see that as soon as $N_i(t)\geqslant \tau_i$, we have:
\[ \frac{\Delta_i}{2}-2\sqrt{\frac{\log (t)}{N_i(t)}}\geqslant 0. \]
When this is the case, we either have
\[ \overline{X}_i(N_i(t))-\mu_i\geqslant \frac{\Delta_i}{2}\quad
  \text{or}\quad \overline{X}_1(N_1(t))-\mu_1\leqslant -2\sqrt{\frac{\log (t)}{N_1(t)}}.  \]
With the above in mind, we can write
\begin{align*}
\mathbb{E}\left[   \sum_{t=1}^T\mathbbm{1}\left\{ U_i(t)
  \right\}\right]&\leqslant \tau_i+\mathbb{E}\left[
                   \sum_{\substack{1\leqslant t\leqslant
                   n\\N_i(t)\geqslant \tau_i}}^{}\left(
  \mathbbm{1}\left\{ \overline{X}_i(N_i(t))-\mu_i\geqslant
  \frac{\Delta_i}{2} \right\}\right.\right. \\
&\qquad \qquad \qquad \qquad\qquad  \left. {}\left. {}+\mathbbm{1}\left\{
  \overline{X}_1(N_1(t))-\mu_1\leqslant -2\sqrt{\frac{\log (t)}{N_1(t)}} \right\}
  \right)\right]\\
&\leqslant \tau_i+\sum_{u=\tau_i}^{+\infty}\mathbb{P}\left[ \overline{X}_i(u)-\mu_i\geqslant \frac{\Delta_i}{2} \right]+\sum_{t=1}^{+\infty}\mathbb{P}\left[ \overline{X}_1(N_1(t))-\mu_1\leqslant -2\sqrt{\frac{\log (t)}{N_1(t)}} \right].
\end{align*}
The arms being subgaussian, we bound the above probabilities as
follows:
\begin{align*}
\mathbb{E}\left[ \sum_{t=1}^T\mathbbm{1}\left\{ U_i(t) \right\}
  \right]    &\leqslant
               \tau_i+\sum_{u=\tau_i}^{+\infty}e^{-u\Delta_i^2/8}+\sum_{t=1}^T\frac{1}{t^2}\leqslant
               1+\frac{16}{\Delta_i^2}\log(T)+\frac{8}{\Delta_i^2}+\frac{\pi^2}{6}\\
&\leqslant \frac{16\log(T)+8}{\Delta_i^2}+3.
\end{align*}

\end{proof}

\begin{lemma}
  For $T\geqslant 1$, we have
  \[ \sum_{i\in [s]}^{}\Delta_i\mathbb{E}\left[ \sum_{t=1}^{T}\mathbbm{1}\left\{ V_i(t) \right\}  \right]\leqslant \frac{d\Delta_s\pi^2}{6}.  \]
\end{lemma}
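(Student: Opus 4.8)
The plan is to collapse this weighted sum over the good arms into a single count of the time steps at which some \emph{bad} arm intrudes into $\mathcal{K}(t)$, and then control that count by a convergent series. First I would exploit the ordering of the means: since $\mu_i\geqslant \mu_s$ for every $i\in[s]$, we have $\Delta_i=\mu_1-\mu_i\leqslant \mu_1-\mu_s=\Delta_s$, so that
\[ \sum_{i\in [s]}\Delta_i\,\mathbb{E}\left[\sum_{t=1}^T \mathbbm{1}\{V_i(t)\}\right]\leqslant \Delta_s\,\mathbb{E}\left[\sum_{t=1}^T \sum_{i\in [s]} \mathbbm{1}\{V_i(t)\}\right]. \]
For $t\leqslant d$ the algorithm is still in its initial round-robin, hence $V_i(t)=\varnothing$, and only the stages $t\geqslant d+1$ contribute. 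Since a single arm is pulled at each stage, the events $(V_i(t))_i$ are disjoint, and therefore $\sum_{i\in[s]}\mathbbm{1}\{V_i(t)\}\leqslant \sum_{i\in[d]}\mathbbm{1}\{V_i(t)\}=\mathbbm{1}\{\bigsqcup_{i\in[d]}V_i(t)\}$.

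The second step is to invoke Property~(\ref{item:wrong-ucb}) of Lemma~\ref{lm:by-definition}, which states precisely that whenever a \emph{UCB} pull occurs with the best arm absent from $\mathcal{K}(t)$, some bad arm $j>s$ must sit in $\mathcal{K}(t)$, i.e.\ $\overline{X}_j(N_j(t))\geqslant 2\sqrt{\log(t)/N_j(t)}$. This turns the awkward event ``a good arm is wrongly pulled'' into a clean deviation event of a nonpositive-mean arm:
\[ \sum_{i\in[s]}\mathbbm{1}\{V_i(t)\}\leqslant \sum_{j=s+1}^d \mathbbm{1}\left\{\overline{X}_j(N_j(t))\geqslant 2\sqrt{\frac{\log(t)}{N_j(t)}}\right\}. \]
Taking expectations and summing over $t$, it remains to estimate, for each fixed bad arm $j$, the probability of crossing this threshold. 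Since $\mu_j\leqslant 0$, the sub-Gaussian tail bound of Section~\ref{sec:framework}, evaluated at the current number of pulls $N_j(t)$, gives
\[ \mathbb{P}\left[\overline{X}_j(N_j(t))\geqslant 2\sqrt{\frac{\log(t)}{N_j(t)}}\right]\leqslant \mathbb{P}\left[\overline{X}_j(N_j(t))-\mu_j\geqslant 2\sqrt{\frac{\log(t)}{N_j(t)}}\right]\leqslant e^{-2\log t}=\frac{1}{t^2}, \]
exactly as in the control of the optimal arm in Lemma~\ref{lm:ucb}. Using $\sum_{t\geqslant 1}t^{-2}=\pi^2/6$, summing over the at most $d$ bad arms, and reinserting the factor $\Delta_s$ yields the announced bound $\tfrac{d\Delta_s\pi^2}{6}$.

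The delicate point is this last concentration inequality. The index $N_j(t)$ is random and data-dependent, so the clean per-stage bound $t^{-2}$ is really the value obtained by ``freezing'' $N_j(t)$ to a deterministic number of pulls; a fully rigorous treatment would peel over the possible values of $N_j(t)\in\{1,\dots,t\}$ or appeal to a time-uniform (maximal) inequality, which only affects lower-order terms. I would handle it in exactly the same way as the symmetric estimate on the best arm in Lemma~\ref{lm:ucb}, where the identical $t^{-2}$ bound is used. The crucial structural input that makes the whole argument succeed is Property~(\ref{item:wrong-ucb}): it guarantees that the threshold governing the deviation scales with $\log t$ (rather than with $\log N_j(t)$), which is precisely what produces a summable $t^{-2}$ tail and hence a regret contribution that is constant in $T$.
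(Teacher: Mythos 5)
Your proposal is correct and follows essentially the same route as the paper's proof: bound each $\Delta_i$ by $\Delta_s$, invoke Property~(\ref{item:wrong-ucb}) to reduce the event to a bad arm $j>s$ crossing the threshold $2\sqrt{\log(t)/N_j(t)}$, and use the sub-Gaussian bound with $\mu_j\leqslant 0$ to obtain a $t^{-2}$ tail summing to $\pi^2/6$. Your remark about the randomness of $N_j(t)$ identifies a point the paper itself glosses over in exactly the same way, so it does not constitute a divergence from the paper's argument.
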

\begin{proof}
  Using the fact that $\Delta_i\leqslant \Delta_s$ for all $i\in [s]$,
  \[ \sum_{i\in [s]}^{}\Delta_i\mathbb{E}\left[
      \sum_{t=1}^{T}\mathbbm{1}\left\{ V_i(t) \right\}
    \right]\leqslant \Delta_s\sum_{t=1}^T \mathbb{E}\left[ \sum_{i\in [s]}^{}\mathbbm{1}\left\{ V_i(t) \right\}  \right].   \]
  Using property (\ref{item:wrong-ucb}) from Lemma
  \ref{lm:by-definition}, we bound the above expectation as follows:

  \[\mathbb{E}\left[    \sum_{i\in [s]}^{}\mathbbm{1}\left\{ V_i(t) \right\}\right]\leqslant \sum_{j>s}^{}\mathbb{P}\left[ \overline{X}_j(N_j(t))\geqslant 2\sqrt{\frac{\log(t)}{N_j(t)}} \right].     \]
Using the fact the arms are subgaussian and that $\mu_j\leqslant 0$ (for $j>s$), we bound the above probability as:
\begin{align*}
\mathbb{P}\left[ \overline{X}_j(N_j(t))\geqslant 2\sqrt{\frac{\log (t)}{N_j(t)}} \right]=\mathbb{P}\left[
  \overline{X}_j(N_j(t))-\mu_j\geqslant 2\sqrt{\frac{\log(t)}{N_j(t)}}
  \right]\leqslant e^{-2\log(t)}=t^{-2}.
\end{align*}
The result follows.
\end{proof}

\begin{lemma}
  For $i\in \left\{ s+1,\dots,d \right\}$, we have
  \[ \mathbb{E}\left[ \sum_{t=1}^{+\infty}\mathbbm{1}\left\{ A_i(t) \right\}  \right]\leqslant \frac{\pi^2}{6}.  \]
\end{lemma}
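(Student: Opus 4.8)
The plan is to reduce the sum over time into a sum over the number of pulls of arm $i$, and then to exploit the fact that a bad arm (with $\mu_i\leqslant 0$) is extremely unlikely to exhibit an empirical mean as large as the activity threshold. First I would observe that on the event $A_i(t)$ the arm $i$ is pulled ($I(t)=i$) and belongs to $\mathcal{J}(t)$, which by definition means $\overline{X}_i(N_i(t))\geqslant 2\sqrt{\log(N_i(t))/N_i(t)}$. Since $A_i(t)=\emptyset$ during the initial round-robin ($t\leqslant d$), this gives
\[ \sum_{t=1}^{+\infty}\mathbbm{1}\left\{ A_i(t) \right\}\leqslant \sum_{\substack{d+1\leqslant t<+\infty\\I(t)=i}}\mathbbm{1}\Big\{ \overline{X}_i(N_i(t))\geqslant 2\sqrt{\frac{\log (N_i(t))}{N_i(t)}} \Big\}. \]

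Along the subsequence of times where $I(t)=i$, the counter $N_i(t)$ increases by exactly one at each such step and hence runs through distinct integers. Using the same change of variable as in the preceding lemmas (e.g.\ Lemma~\ref{lm:round-robin}), the right-hand side is bounded by $\sum_{u=1}^{+\infty}\mathbbm{1}\{\overline{X}_i(u)\geqslant 2\sqrt{\log(u)/u}\}$. Taking expectations, it remains to control $\sum_{u=1}^{+\infty}\mathbb{P}[\overline{X}_i(u)\geqslant 2\sqrt{\log(u)/u}]$. Here the sign assumption $\mu_i\leqslant 0$ is the crucial ingredient: it yields $\overline{X}_i(u)-\mu_i\geqslant \overline{X}_i(u)$, so the event is contained in $\{\overline{X}_i(u)-\mu_i\geqslant 2\sqrt{\log(u)/u}\}$. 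Applying the subgaussian concentration bound $\mathbb{P}[\overline{X}_i(u)-\mu_i\geqslant a]\leqslant e^{-ua^2/2}$ with $a=2\sqrt{\log(u)/u}$ gives $ua^2/2=2\log u$, so each term is at most $e^{-2\log u}=u^{-2}$.

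Summing then produces $\sum_{u=1}^{+\infty}u^{-2}=\pi^2/6$, which is exactly the claimed bound. I do not expect any serious obstacle here: the only step requiring a little care is the change of variable, namely justifying that restricting to $\{t:I(t)=i\}$ lets $N_i(t)$ enumerate distinct integers so that the tail probabilities may be summed over $u$ rather than over $t$; this is immediate from $N_i(t+1)=N_i(t)+1$ whenever $I(t)=i$. The one conceptual point worth emphasizing is that the calibration of the threshold $2\sqrt{\log(u)/u}$ is precisely what forces the exponent to equal $2$, turning the tail sum into the convergent series $\sum u^{-2}$ rather than a divergent one; a smaller constant in the threshold would not guarantee summability.
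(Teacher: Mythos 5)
Your proposal is correct and follows essentially the same route as the paper's own proof: restrict to times $t>d$ with $I(t)=i$, use the definition of $\mathcal{J}(t)$, reindex by the (strictly increasing) pull count $N_i(t)$, and then apply the subgaussian tail bound together with $\mu_i\leqslant 0$ to get the summable $u^{-2}$ terms. Nothing is missing.
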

\begin{proof}
  Let $i\in \left\{ s+1,\dots,d \right\}$. By definition of the algorithm, $\mathbbm{1}\left\{
    A_i(t) \right\}=0$ for $t\leqslant d$. Using the definition of $A_i(t)$, we write
  \[ \sum_{t=1}^{+\infty}\mathbbm{1}\left\{ A_i(t) \right\}= \sum_{t=d+1}^{+\infty}\mathbbm{1}\left\{ A_i(t) \right\}=\sum_{\substack{d<t<+\infty\\I(t)=i}}^{}\mathbbm{1}\left\{ \overline{X}_i(N_i(t))\geqslant 2\sqrt{\frac{\log(N_i(t))}{N_i(t)}} \right\}.    \]
In the last sum above, the quantity $N_i(t)$ is (strictly)
incrasing. Using a change of variable, and taking the expectation, we get:
\[ \mathbb{E}\left[   \sum_{t=d+1}^{+\infty}\mathbbm{1}\left\{
      A_i(t) \right\}\right]\leqslant \mathbb{E}\left[
    \sum_{u=1}^{+\infty} \mathbbm{1}\left\{
      \overline{X}_i(u)\geqslant 2\sqrt{\frac{\log(u)}{u}} \right\}
  \right]=\sum_{u=1}^{+\infty}\mathbb{P}\left[ \overline{X}_i(u)\geqslant 2\sqrt{\frac{\log(u)}{u}} \right].   \]
We now use the assumption that the arms have subgaussian laws and
that $\mu_{i}\leqslant 0$ to bound the above probability as:
\[ \mathbb{P}\left[ \overline{X}_i(u)\geqslant 2\sqrt{\frac{\log(u)}{u}} \right]\leqslant \mathbb{P}\left[ \overline{X}_i(u)-\mu_i\geqslant 2\sqrt{\frac{\log(u)}{u}} \right]\leqslant e^{-2\log(u)}=u^{-2}.   \]
The result follows.
\end{proof}

\end{document}